\newtheorem{definition}{Definition}
\newtheorem{theorem}{Theorem}
\newcommand{\comments}[2]{{\bf[\textcolor{red}{#1}: \textcolor{blue}{#2}]}}
\newcommand{\stile}{{STile}}
\newcommand{\sload}{{SRead}}
\newcommand{\swrite}{{SWrite}}
\newcommand{\NX}[1]{{\comments{Ningxin}{#1}}}
\newcommand{\revise}[1]{\textcolor{black}{#1}}
\newcommand{\eg}{{e.g.,\ }}
\newcommand{\ie}{{i.e.,\ }}
\newcommand{\para}[1]{{\bf \noindent #1 \hspace{2pt}}}
\newcommand{\sysname}{{PIT}} 
\newcommand{\adjp}{permutation invariant}
\definecolor{codegreen}{rgb}{0,0.6,0}
\definecolor{codegray}{rgb}{0.5,0.5,0.5}
\definecolor{codepurple}{rgb}{0.58,0,0.82}
\definecolor{backcolour}{rgb}{0.95,0.95,0.92}
\lstdefinestyle{mystyle}{
    backgroundcolor=\color{white},   
    commentstyle=\color{codegreen},
    keywordstyle=\color{magenta},
    numberstyle=\tiny\color{codegray},
    stringstyle=\color{codepurple},
    basicstyle=\ttfamily\footnotesize,
    breakatwhitespace=false,         
    breaklines=true,                 
    captionpos=b,                    
    keepspaces=true,                 
    numbers=left,                    
    numbersep=5pt,                  
    showspaces=false,                
    showstringspaces=false,
    showtabs=false,                  
    tabsize=2
}
\begin{document}


\date{}
\title[]{\sysname{}: Optimization of Dynamic Sparse Deep Learning Models via Permutation Invariant Transformation}
\author{Ningxin Zheng$^*$, Huiqiang Jiang$^*$, Quanlu Zhang, Zhenhua Han, Lingxiao Ma, Yuqing Yang}\author{Fan Yang, Chengruidong Zhang, Lili Qiu, Mao Yang, Lidong Zhou}
\thanks{$^*$ Equal contribution.}
\affiliation{\vspace{0.1cm}\Large\institution{Microsoft Research}\country{}}
\renewcommand{\shortauthors}{}


\begin{abstract}
Dynamic sparsity, where the sparsity patterns are unknown until runtime, poses a significant challenge to deep learning. 
The state-of-the-art sparsity-aware deep learning solutions are restricted to pre-defined, static sparsity patterns due to significant overheads associated with preprocessing. Efficient execution of dynamic sparse computation often faces the misalignment between the GPU-friendly tile configuration for efficient execution and the sparsity-aware tile shape that minimizes coverage wastes (non-zero values in tensor). 

In this paper, we propose \sysname{}, a deep-learning compiler for dynamic sparsity. \sysname{} proposes a novel tiling mechanism that leverages Permutation Invariant Transformation (PIT), a mathematically proven property, to transform multiple sparsely located micro-tiles into a GPU-efficient dense tile without changing the computation results, thus achieving both high GPU utilization and low coverage waste. 
Given a model, \sysname{} first finds feasible PIT rules for all its operators and generates efficient GPU kernels accordingly. At runtime, with the \revise{novel} \sload{} and \swrite{} primitives, PIT rules can be executed extremely fast 
to support dynamic sparsity in an online manner. 
Extensive evaluation on diverse models shows that \sysname{} can accelerate dynamic sparsity computation by up to 5.9x \revise{(average 2.43x)} over state-of-the-art compilers. 

\end{abstract}


\begin{CCSXML}
<ccs2012>
   <concept>
       <concept_id>10011007.10011006.10011041.10011045</concept_id>
       <concept_desc>Software and its engineering~Dynamic compilers</concept_desc>
       <concept_significance>500</concept_significance>
       </concept>
   <concept>
       <concept_id>10010520.10010521.10010542.10010294</concept_id>
       <concept_desc>Computer systems organization~Neural networks</concept_desc>
       <concept_significance>500</concept_significance>
       </concept>
 </ccs2012>
\end{CCSXML}

\keywords{Deep learning compilers, Dynamic sparsity, Dynamic compilers, Transformation}

\ccsdesc[500]{Software and its engineering~Dynamic compilers}
\ccsdesc[500]{Computer systems organization~Neural networks}

\acmYear{2023}\copyrightyear{2023}
\setcopyright{acmlicensed}
\acmConference[SOSP '23]{ACM SIGOPS 29th Symposium on Operating Systems Principles}{October 23--26, 2023}{Koblenz, Germany}
\acmBooktitle{ACM SIGOPS 29th Symposium on Operating Systems Principles (SOSP '23), October 23--26, 2023, Koblenz, Germany}
\acmPrice{15.00}
\acmDOI{10.1145/3600006.3613139}
\acmISBN{979-8-4007-0229-7/23/10}

\maketitle

\section{Introduction}
Tensor is the key data abstraction in deep learning. It provides a powerful and flexible way to represent contents including images, audio, and language sentences. 
Deep learning computation mostly involves operations over tensors (\eg matrix multiplications). 
To efficiently execute deep learning models, accelerators like GPUs have been designed to perform these tensor operations in parallel. GPUs usually have multiple Stream Multiprocessors (SM), each with hundreds of CUDA cores that can simultaneously perform arithmetic operations on different portions of the tensor. To efficiently utilize these parallel processing capabilities, deep learning compilers use tiling to break up tensors into smaller, regular tensor slices, a.k.a. tiles, that can be processed in parallel by multiple SMs. Tiling has been demonstrated to be a key optimization technique for 
tensor computations by effectively exploiting data locality and parallelism.

Recent developments suggest that deep learning computations are increasingly \emph{sparse}, \ie 
operations on tensors with many zeros. In addition to sparse model weights, which are often static and known a priori, more sparsity patterns are found to depend on inputs and are only known at runtime, \ie dynamic sparsity. For example, large language models (\eg GPT~\cite{brown2020language, OpenAI2023GPT4TR}, OPT~\cite{zhang2022opt}) exhibit various types of dynamic sparsity. Firstly, transformer models (and other types of models) show inherent dynamic sparsity in their inputs, activation and gradients~\cite{Liu2023TenLW}. Only a very small number of elements in the activation map are non-zero~\cite{li2022large} (\eg 3.0\% for T5-Base~\cite{raffel2020exploring} and 6.3\% for ViT-B/16~\cite{dosovitskiy2021an}). Secondly,  dynamic sparsity is being leveraged to help further scale DNN models to a large size. Most existing models with more than one trillion parameters adopt the mixture of experts (MoE) structure, which activates experts sparsely and dynamically depending on the input~\cite{fedus2021switch}. Moreover, dynamic sparse training, which dynamically prunes less important connections in the model during training, is attracting more attention for its superior computational efficiency~\cite{Liu2023TenLW,Evci2019RiggingTL}.

\begin{figure}[t]
    \centering
    \includegraphics[width=0.5\textwidth]{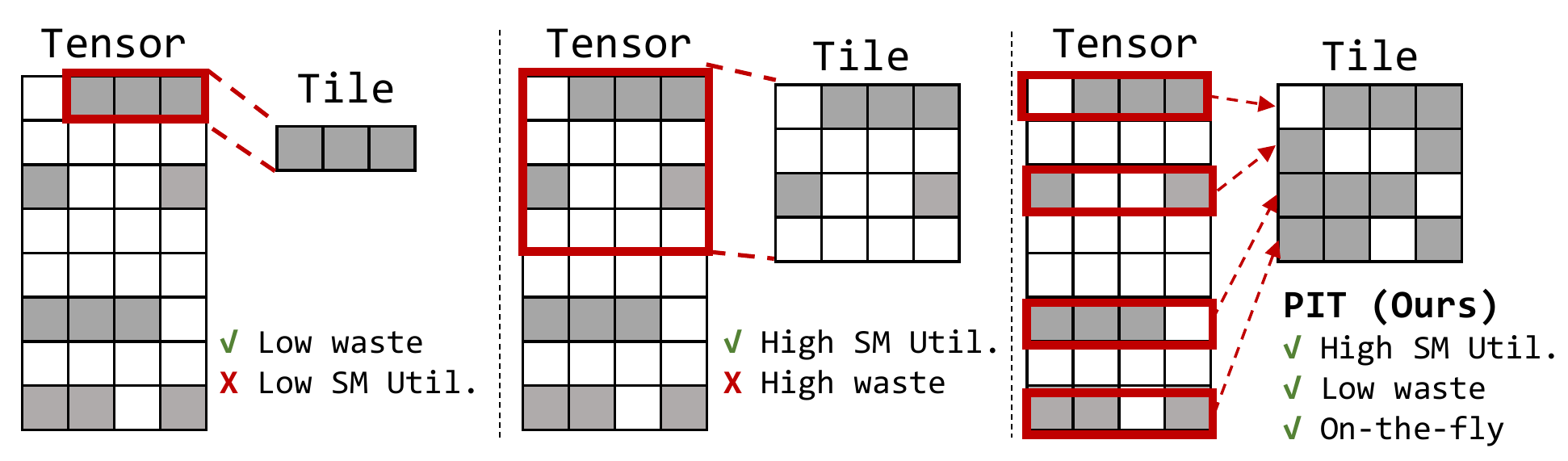}
    \caption{Tiling for dynamic sparsity (shaded blocks are non-zero values). Misalignment between sparsity granularity and the tile shape of efficient GPU kernels.}
    \label{fig:tile}
\end{figure}

It is a daunting challenge to efficiently compute deep learning models in the presence of dynamic sparsity. 
Modern deep learning compilers~\cite{tvm, triton, roller}, including the sparsity aware compilers~\cite{sparta, tvmsparse}, leverage the sparsity patterns in deep learning models known at the compile time to find the right kernel configuration. They are infeasible to be used for dynamic sparsity due to high compilation time. Some fine-grained sparsity solutions can be executed at runtime~\cite{sputnik,cusparse}, but they use special sparse formats (\eg CSR: Compressed Sparse Row) incurring high overheads for format conversion.

Efficient GPU computation requires loading tiled data into shared memory for data reuse. However, the shape of hardware efficient tiles is usually misaligned with diverse and dynamic sparsity patterns. As shown in the left and mid figures in \autoref{fig:tile}, although sparsity-aligned tiles incur low coverage waste, they run slow on GPU due to low SM utilization and data reuse. But, since sparse values are usually non-continuously located, the tile shapes used in efficient GPU kernels cannot avoid covering a lot of zeros, leading to waste in sparse computation. Existing solutions for sparsity try to find better trade-offs between efficient tiling and sparsity shape alignment. They either require knowledge at the compile time, or incur significant overheads at runtime due to format conversation.



In this paper, we present \sysname{}, a compiler for the efficient execution of deep learning models with dynamic sparsity. \sysname{} resolves the misalignment of tile shapes by constructing GPU-efficient tiles with multiple sparsely located ``micro-tiles'' (\eg the right figure in \autoref{fig:tile}). The construction is performed at runtime and its correctness is guaranteed mathematically by ``Permutation-Invariant Transformation'', a property widely existed in DL operators but is not well-exploited for dynamic sparsity. Most DL operators have one or more dimensions (we call them  \sysname{}-axis), whose computation can be arbitrarily reordered, without affecting the result. For example, in matrix multiplication (Matmul) whose tensor expression is $C[m,n]$+=$A[m,k] \times B[k,n]$, the columns of $A$ along with the rows of $B$ (\ie the $k$ dimension) can be permuted in any order without affecting the computation result. The rows of $A$ along with the rows of $C$ (\ie the $m$ dimension) can also be permuted without affecting the computation. By merging micro-tiles into a dense tile along a certain \sysname{}-axis,  \sysname{} achieves the best of both worlds. It can leverage the efficient execution of dense tile at GPU SMs while achieving low wasted computation with the fine-grained micro-tile coverage of non-zeros.


A key challenge to leverage permutation-invariant transformation is the runtime transformation overhead. 
\sysname{} solves the challenge by introducing \emph{SRead} and \emph{SWrite}, two data rearrangement primitives for loading and storing between sparsely located data in GPU global memory and the shared memory. By piggybacking the data rearrangement on the data movement across the memory hierarchy during standard tensor computation, \emph{SRead} and \emph{SWrite} can achieve almost no extra overhead for permutation-invariant transformation. 
Even when the coordinates of sparse values in the tensors are unknown, \sysname{} can detect them quickly in an online manner. 
Interestingly, by treating it as a special case of dynamic sparsity, \sysname{} can also work effectively on deep learning models with static sparsity patterns where the misalignment of tile shapes is also a source of inefficiency. 



We extensively evaluate \sysname{} on five representative models (\ie Switch Transformer~\cite{wolf2020transformers}, OPT~\cite{zhang2022opt}, BERT~\cite{bert}, LongFormer~\cite{longformer}, Museformer~\cite{museformer}) and find that it significantly speeds up both inference and training while using less memory.  In terms of inference, \sysname{} achieve up to 5.9x speedup with less memory consumption compared to state-of-art solutions (\S\ref{sec:eval_end_2_end}). Additionally, we also applied \sysname{} to speed up the large language model training and sparse training. \sysname{}  achieves up to 1.8x speedup on the OPT training, and up to 2.4x speedup for the sparse training compared to the previous solutions. Dynamic sparsity is becoming increasingly important in deep learning, and we plan to release \sysname{} as open-source software to encourage further research on the optimization and algorithms on dynamic sparsity.
\section{Background and Motivation}
\label{sec:motivation}

\begin{figure}[t]
  \centering
  \subfloat[Dynamic Attention]{
    \label{fig:dynamic_sparse_case_dynamic_algorithm}
    \includegraphics[height=2.8cm]{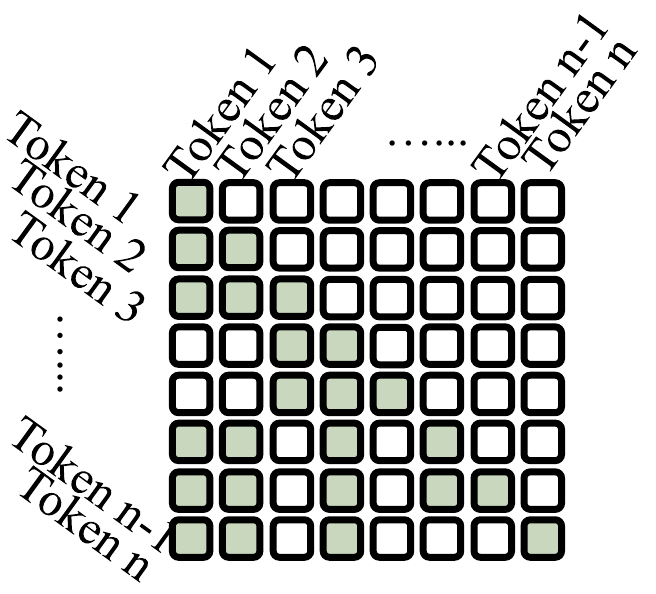}} 
  \subfloat[Mixture-of-Experts]{
    \label{fig:dynamic_sparse_case_moe}
    \includegraphics[height=2.5cm]{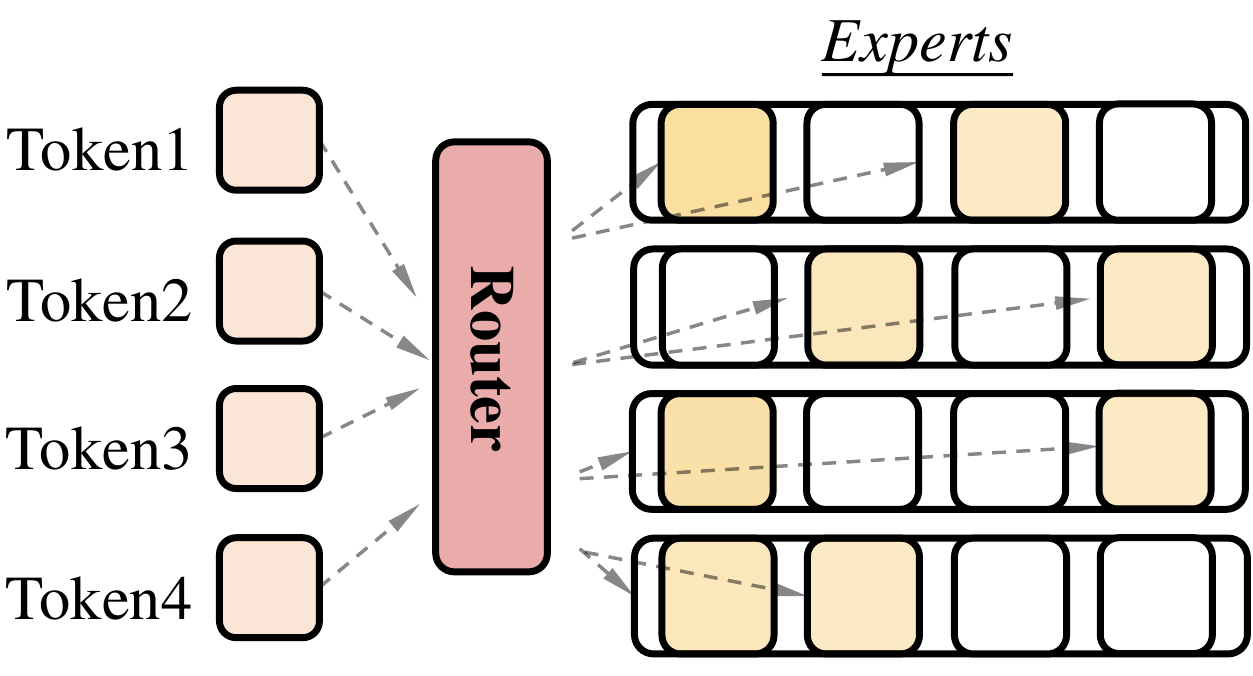}} \\
  \subfloat[Dynamic Sequence Length]{
    \label{fig:dynamic_sparse_case_seq_len}
    \includegraphics[height=2.3cm]{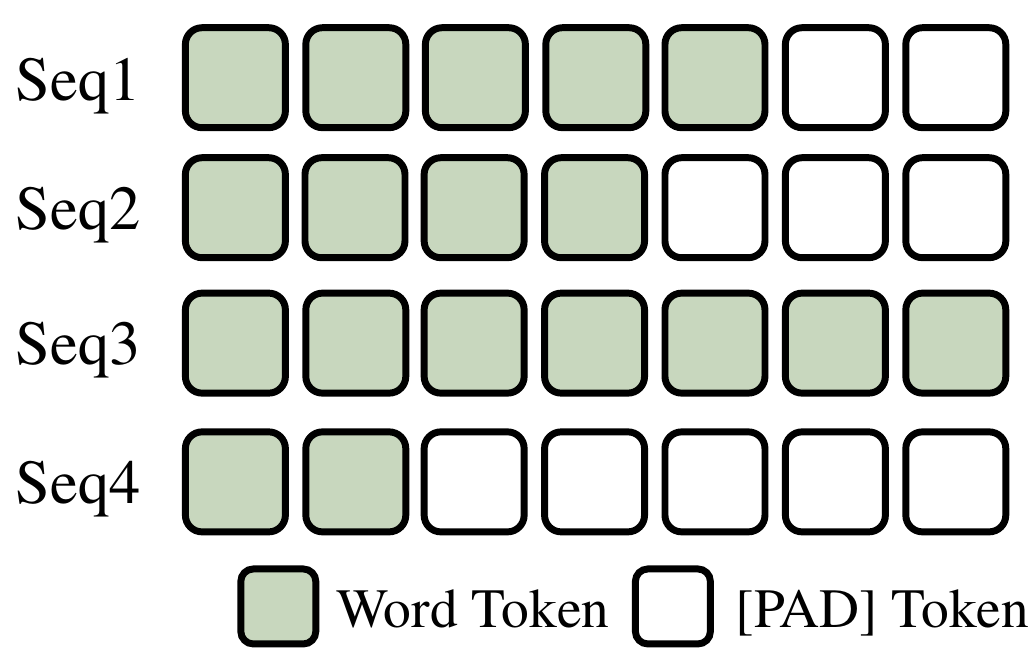}} 
  \subfloat[Sparse Training]{
    \label{fig:dynamic_sparse_case_dynamic_training}
    \includegraphics[height=2.2cm]{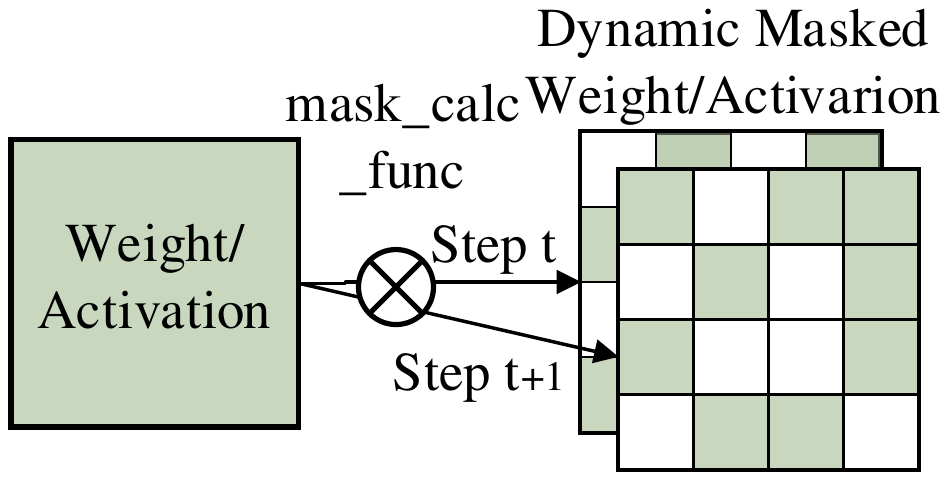}} 
  \caption{Examples of dynamic sparsity in deep learning.}
  \label{fig:dynamic_sparse_case}
\end{figure}

\subsection{Dynamic Sparsity}
As deep learning models become increasingly deep and large, parameters, activations, and gradients tend to approach zeros or become exactly zero, resulting in model sparsity. 
With the rapid development of Large Language Models (LLMs), it is a growing trend that many types of sparsity are shown to be dynamic, which is only known at runtime.

\para{Dynamic Sparsity in LLMs.}
Dynamic attention allows models to compute only on the most informative parts of the data~\cite{han2021dynamic,habibian2021skip,fedus2021switch,zoph2022designing,hwang2022tutel,wang2021spatten,zhou2022energon,rao2021dynamicvit,zhou2021informer,kitaev2020reformer,liu2021transformer}. As shown in \autoref{fig:dynamic_sparse_case_dynamic_algorithm}, models can mask out irrelevant tokens, achieving higher accuracy with less computation. The masks are generated on the fly. 
\revise{Our evaluation also} shows the activation outputs of three popular LLMs (OPT, Switch Transformer, and T5) have a sparsity ratio of 95-99.9\% (\ie percentage of zeros), which can be skipped without impacting the model accuracy (refer to \S\ref{sec:eval_end_2_end}).

\para{Sparse Training.}
Sparse training is critical and widely used in many fields of deep learning, such as Dynamic Sparse Training (DST)~\cite{Evci2019RiggingTL}, MAE~\cite{he2022masked}, pruning~\cite{nn_pruning}, and supernet training\cite{cai2019once, chen2021autoformer}. \autoref{fig:dynamic_sparse_case_dynamic_training} shows an example of dynamic pruning in training. The algorithm will mask out a portion of the weight according to the current model state. The mask in each step is constantly changing during training.

\para{Mixture-of-Experts (MoE).}
MoE is a neural architecture that shows dynamic sparsity and is widely used in popular large models. Most LLMs over one-trillion parameters adopt MoE to scale up the model ~\cite{fedus2021switch,lepikhin2020gshard,du2022glam}. As shown in \autoref{fig:dynamic_sparse_case_moe}, a sequence of tokens passes through a gating function that assigns each token to expert(s) dynamically. Each expert handles only a proportion of the tokens of its expertise, by masking the other tokens not routed to it. Therefore, each expert’s computation is sparse and depends on the input, which is only known at the runtime. 

\para{Dynamic Sequence Length.}
In natural language processing~\cite{bert,ouyang2022training,raffel2020exploring} and multi-modality models~\cite{radford2021learning,ramesh2022hierarchical}, token sequences naturally have varied input and output lengths. Processing such sequences in batch requires padding them to the same sequence length (typically the maximum length in the batch), as shown in \autoref{fig:dynamic_sparse_case_seq_len}. Such padding leads to waste in computation and can be treated as dynamic sparsity.

\begin{figure}[t]
  \centering
  \subfloat[Latency and wasted computation of different tile sizes.]{
    \label{fig:trade_off}
   \includegraphics[height=4.2cm]{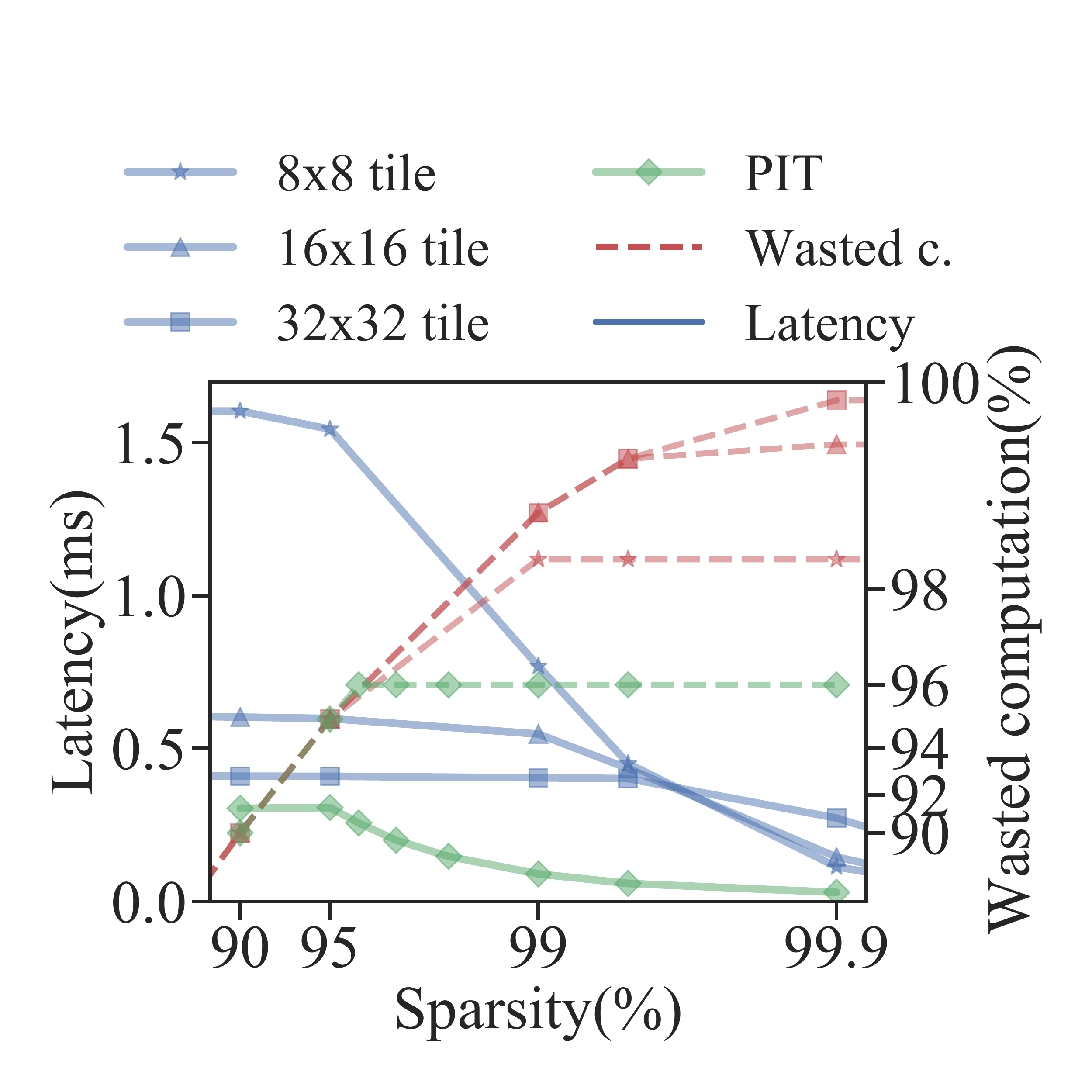}}
  \subfloat[Conversion overheads.]{
    \label{fig:convert_computation}
    \includegraphics[height=3.7cm]{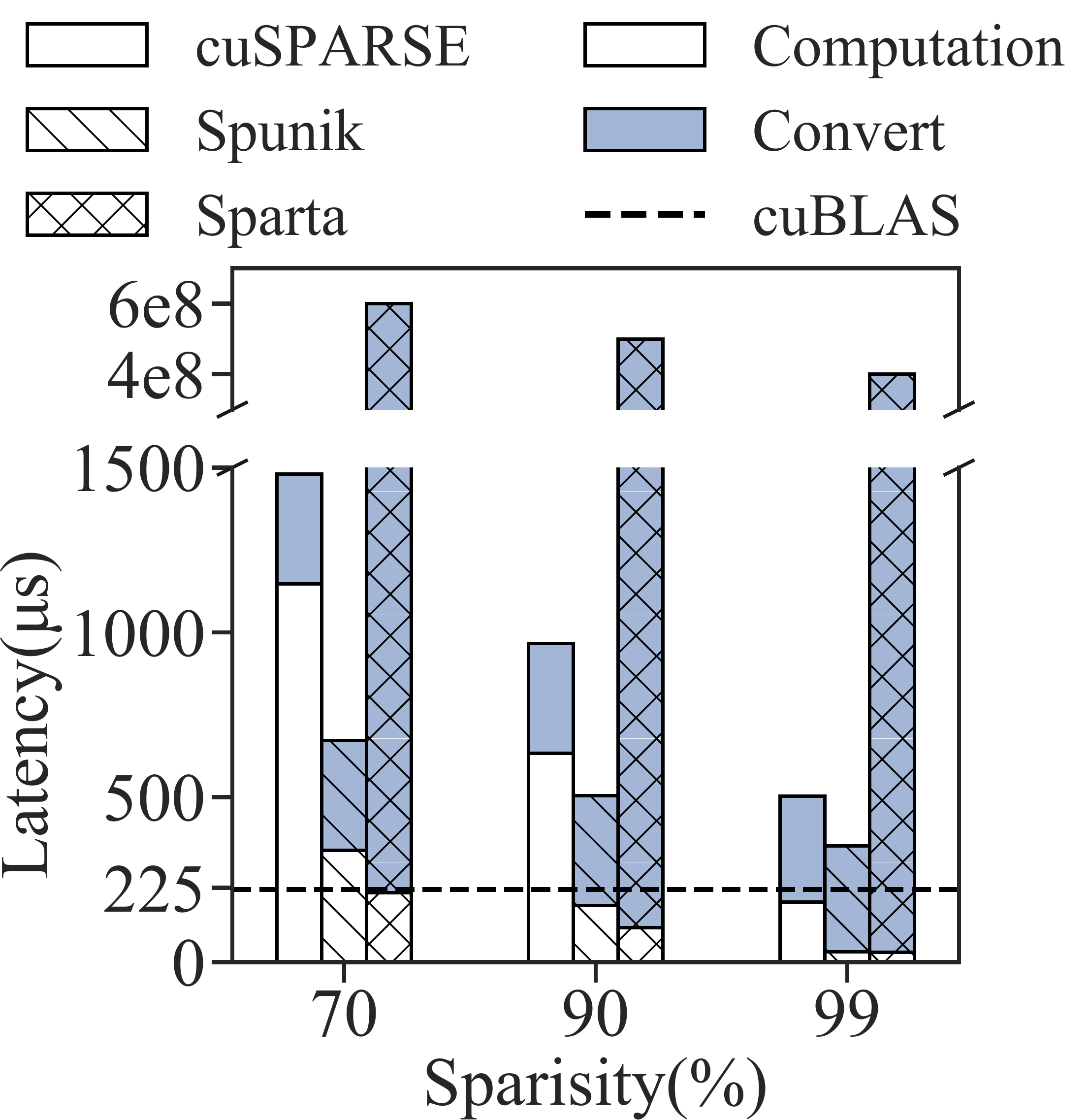}} 
  \label{fig:motivation_convert}
  \caption{Inefficient tiling of dynamic sparsity due to wasted computation and high sparse format conversion overhead.}
\end{figure}

\subsection{Inefficiency Due to Dynamic Sparsity}\label{dilemma}
\para{Tiling in deep learning compilers.}
Deep learning compilers like TVM~\cite{tvm}, Triton~\cite{triton}, Roller~\cite{roller}, often use tiling, a technique that slices a tensor into smaller tiles. By reusing cached tiles, tiling reduces the amount of data that needs to be transferred from slower memory like DRAM.
By tuning tile shape (\eg 32x32 or 16x64), compilers can optimize data reuse for a particular model computation on a specified hardware architecture, thus improving kernel performance.

\para{Inefficient tiling in the presence of dynamic sparsity.}
Despite its effectiveness for conventional dense models, tiling can be inefficient for dynamic sparsity.
As shown in \autoref{fig:tile}, tile shapes aligned with the sparse pattern minimize the coverage of non-zero values but are inefficient when executed on GPUs. On the contrary, GPU-efficient tiles introduce waste (covering too many zeros). 
Using actual sparse activations extracted from OPT~\cite{zhang2022opt} (a large language model), \autoref{fig:trade_off} shows the performance of GPU kernels tuned with various tile shapes when executing a sparse matrix multiplication under different sparsity ratios. When the sparsity ratio is lower than 99.6\%, 32x32 tiles are more efficient although it contains the most wasted coverage on zeros. The 8x8 tiles are faster only when sparsity is very high (>99.9\% for this case) because of more saved computation than other tile shapes. Different tile shapes face the dilemma between sparsity-friendly tile coverage and GPU-efficient execution. 


\para{Sparsity-aware compilers or libraries.} \revise{Some sparsity-aware compilers or libraries} try to break the dilemma by compiling specialized GPU kernels (\eg SparTA~\cite{sparta}), or transforming the data into a special format (\eg cuSPARSE~\cite{cusparse}, Spunik~\cite{sputnik}). They incur significant overheads that degrade runtime performance. 
\autoref{fig:convert_computation} compares the conversion overhead (compiling or format transformation) of SparTA, cuSPARSE, and Spunik when handling dynamic sparsity. SparTA takes 400-600 seconds to compile the specialized kernel, impractical to handle sparsity patterns changed at the runtime. Although cuSPARSE and Spunik can be used for dynamic sparsity, they suffer from large transformation overheads. 
As a result, the overall performance is even worse than directly using dense computation when the sparsity ratio is high and thus is inefficient to handle dynamic sparsity.

\begin{figure}[t]
	\centering
	\includegraphics[width=0.95\columnwidth]{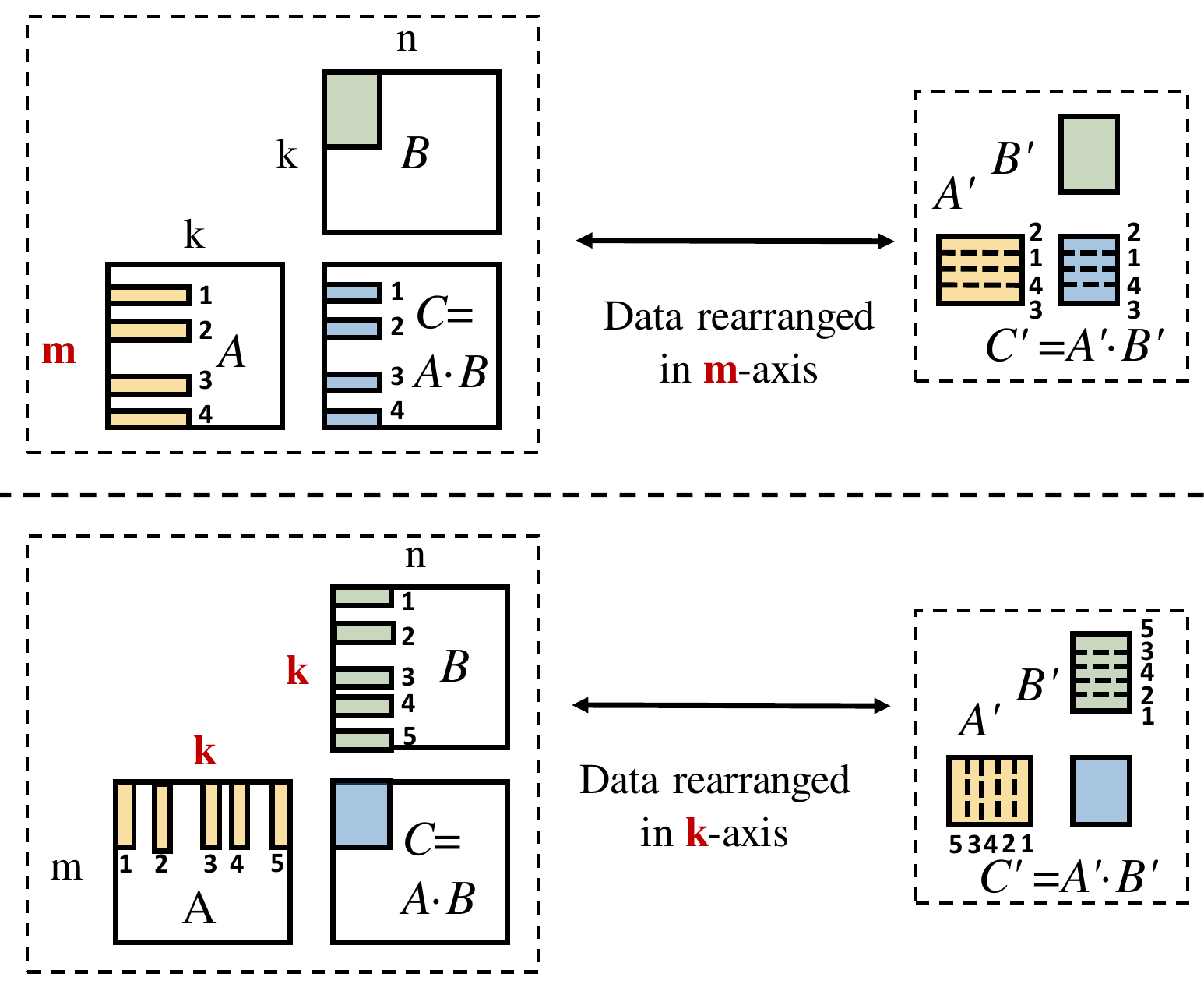}
    \vspace{0.1cm}
	\caption{\label{fig:matmul_example} Examples of sparse matrix multiplication. By rearranging sparse data along an axis, the sparse computation can be equivalently done with dense matrix multiplication.}
\end{figure}

\begin{figure*}[t]
	\centering
	\includegraphics[width=1.9\columnwidth]{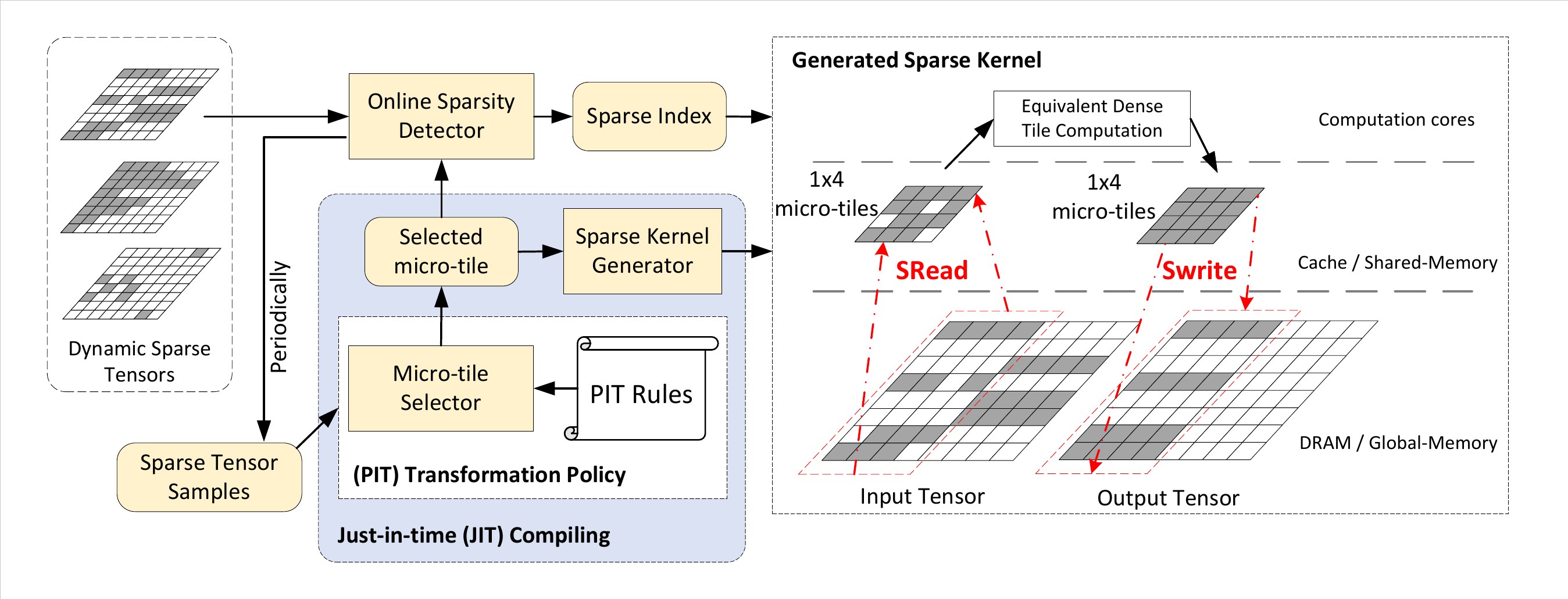}
	\caption{\label{fig:overview} Architecture overview: \sysname{} uses JIT compiling to generate sparse GPU kernels for dynamic sparse patterns. A sparse GPU kernel uses \sload{}  to load micro-tiles from the input tensor (detected at runtime) to dense computation tiles at the shared memory. After the equivalent dense tile computation, it uses \swrite{} to write output micro-tiles to the output tensor. \sload{} and \swrite{} are conducted only on PIT-axis, where \sysname{} guarantees the correctness of its data rearrangement.}
\end{figure*}

\subsection{Opportunity}
The right figure in \autoref{fig:tile} shows that it is possible to load data sparsely located at different positions in parallel, and merge them into a dense tile with a shape efficient for computation. This sparse-to-dense transformation will not affect the correctness of the result.  \autoref{fig:matmul_example} shows two examples of matrix multiplication. The first example multiplies a sparse tensor $A$ with a dense tensor $B$. The shaded area has non-zero values and the rest are zeros. By rearranging the non-zero rows of $A$ into a new tensor $A'$, the computation can be conducted on dense matrix multiplication between $A'$ and $B'$ (the non-zero dense tile of $B$) producing the result tensor $C'=A'\times B'$. After writing the rows of $C'$ to the original rows in C, we get the same result of $C=A\times B$. Note that, the rearrangement of $A$'s non-zero rows can be in any permutation without affecting the correctness of each row in $C'$. The rows of $C'$ are written back to $C$ with the reverse permutation of $A\xrightarrow{}A'$ to restore the correct row indexes. The second example shows the matrix multiplication of two sparse matrixes $A$ and $B$. By rearranging the data on the dimension $k$ of $A$ and $B$ (\ie columns of $A$ and rows of $B$), the calculation can be done similarly using the matrix multiplication between two dense matrixes (\ie $A'$ and $B'$). As we have shown in \autoref{fig:trade_off}, dense tiles are more GPU-efficient. If the data rearrangement in \autoref{fig:matmul_example} has negligible overhead at runtime, the dynamic sparse tensor computation can be conducted using dense tiles with low waste and high GPU efficiency, thus achieving superior performance. This motivates us to design \sysname{} to systematically exploit this \emph{permutation invariant transformation} of DL operators for efficient support to dynamic sparsity.

\section{\sysname{} Design} \label{sec:design}

\sysname{} is a compiler framework designed to address challenges introduced by dynamic sparsity.
The core of \sysname{} is its permutation invariant transformation (abbr. PIT transformation), which converts sparse tensors into a computation-efficient dense format in an online manner. 
\autoref{fig:overview} illustrates \sysname{}'s architecture.
\sysname{} introduces micro-tile, a data unit with a minimum size used to compose a larger, hardware-friendly tile for efficient computation. Given sparse tensors, the PIT transformation policy identifies the most efficient micro-tile from all feasible micro-tiles derived from PIT rules, which are mathematically equivalent computation transformations of deep learning operators. The sparse kernel generator then creates the sparse kernel based on the selected micro-tile. To handle dynamic sparsity, the kernel takes sparse data and the index of micro-tiles with non-zero values for proper computation. 
The sparsity detector constructs the index online. The PIT property enables full parallelism in sparsity and index construction, minimizing the online execution overhead. In the sparse kernel, \sload{} and \swrite{} rearrange the sparse data into a dense format based on the constructed index, which is then processed by the highly efficient dense tile-based computation.

\subsection{PIT Transformation Mechanism} \label{sec:pitmechanism}

\para{Micro-tile.}
Micro-tile is a small data unit with a shape aligned with
the read/write transaction granularity of the lower level memory of an accelerator (\eg GPU). Micro-tile makes the access of sparse data as efficient as dense ones. For example, the read/write transaction of global memory in CUDA GPUs is 32 bytes, the smallest micro-tile size on this type of accelerator is 1x8 float32 (or 1x4 of float64), which is fine-grained enough for many sparse patterns. 
For example, in \autoref{fig:overview}, the input tensor has non-zero values in a granularity of 1x1, 1x2, 1x3, and 1x4. This can be covered with 1x4 micro-tiles, assuming 1x4 is the size of read/write transaction. This way, micro-tile achieves a good trade-off between computation efficiency and coverage waste.

An efficient dense tile has already been aligned with data access of GPU shared memory (\eg minimizing bank conflict~\cite{roller}) and saturated computation cores through well-optimized warp schedule~\cite{rammer}. By transforming the sparsely located micro-tiles to the required data format of the dense computation tile, the computation of a sparse operator is well aligned with every component of a GPU, including the global memory, the shared memory, and computation cores, thus achieving high efficiency. 
The transformation between sparse and dense computation tiles leverages permutation invariant transformation, a property that enables the dense tile to work on the rearranged micro-tiles correctly. Such a property commonly exists in deep learning operators, which will be elaborated in \S\ref{sec:pit_policy}.
\begin{figure} [t]
\begin{lstlisting}[language=Python]
class MicroTiledOp:
    # data format in global memory, A is sparse
    InputMicrotileSizes  # 1x4 for A, None for B
    OutputMicrotileSize  # None for C
    # data format in shared memory
    TileInputFormats     # 4x4 dense for A and B
    TileOutputFormat     # 4x4 dense for C
    DenseTileImpl        # C[4x4]=A[4x4]*B[4x4]
    def GenerateKernel()

\end{lstlisting}
\caption{The definition of micro-tiles for a sparse operator. The comments are the values of those attributes for the example micro-tile on the right of \autoref{fig:overview}.}
\label{fig:stile}
\end{figure}


\begin{figure} [t]
\centering
\begin{lstlisting}[language=c++]
/*Generated Sparse Kernel*/
__global__ void SparseKernelTemplate( 
    struct Tensor Inputs, struct SparseIdx InIdx,
    struct Tensor Output, struct SparseIdx OutIdx,
){ 
    /* First allocate shared memory */
    InTiBlocks = AllocSharedM(TileInputFormats);
    OutTiBlock = AllocSharedM(TileOutputFormat);
    SRead(Inputs, InTiBlocks, InIdx); 
    DenseTileImpl(InTiBlocks,OutTiBlock); 
    SWrite(OutTiBlock, Output, OutIdx);
}
\end{lstlisting}
\caption{The sparse kernel template with \sload{} and \swrite{}.}
\label{fig:stile_kernel}
\end{figure}


\autoref{fig:stile} shows the definition of micro-tile on a sparse operator. It includes the micro-tile sizes for the operator's inputs/output and the dense computation tile, to which the micro-tiles are mapped. The attributes \texttt{TileInputFormats} and \texttt{TileOutputFormat} represent the data format (i.e., dense tile shapes) of the inputs and output respectively required by \texttt{DenseTileImpl}. 
A transformation policy (\S\ref{sec:pit_policy}) determines such information, used for sparse kernel generation later.

\para{\sload{} and \swrite{}.}
In contrast to dense tensor computations, whose tiles are loaded, processed, and stored continuously, \sysname{} generates sparse kernels that employ \sload{} and \swrite{} to handle sparse data at the micro-tile level. 
As illustrated on the right of \autoref{fig:overview}, two primitives \sload{} and \swrite{} do online rearrangement of micro-tiles in input tensors to prepare data in \texttt{TileInputFormats} and write the data in \texttt{TileOutputFormat} to output micro-tiles respectively. 
The data rearrangement is piggybacked on the data movement across different memory levels, resulting in little additional overheads and eliminating the need for traditional data rearrangement outside the sparse kernel (\eg constructing CSR format~\cite{bulucc2009parallel} for a sparse kernel).

\autoref{fig:stile_kernel} illustrates the template of the \sysname{}'s sparse kernel. The kernel consists of two phases: data arrangement using \sload{} and \swrite{}, and the computation on dense tiles. Both the data rearrangement of \sload{} and \swrite{} require fast online construction of micro-tiles' indexes (\ie \texttt{InIdx}, \texttt{OutIdx})  (\S\ref{sec:fast_index}). The indexes are constructed following the specified micro-tile shape defined in \autoref{fig:stile}. This design effectively separates data encoding/decoding from computation in the sparse kernel, introducing a novel sparse computation paradigm that combines data rearrangement with a (dense) computation tile.



\begin{table}[t]
\centering
    \footnotesize{
\begin{tabular}{l|l|l}
\hline
\textbf{Operator}        & \textbf{Tensor Expression}                                                                              & \textbf{PIT-axis} \\ \hline
ReduceSum       & $C[p]$ += $A[p,l]$                                                                               & $p,l$     \\ \hline
Vector Addition & $C[p]$ = $A[p]$+$B[p]$                                                                             & $p$       \\ \hline
MatMul          & $C[m,n]$ += $A[m,k]$*$B[k,n]$                                                                      & $m,n,k$   \\ \hline
BatchMatMul     & $C[b,m,n]$ += $A[b,m,k]$*$B[b,k,n]$                                                           & $b,m,n,k$ \\ \hline
Convolution     & \begin{tabular}[c]{@{}l@{}}$C[n,f,x,y]$ += \\$A[n,m,x+i,y+j]$*$B[f,m,i,j]$\end{tabular} & $n,m,f$   \\ \hline
\end{tabular}}\vspace{0.1cm}
\caption{Tensor expressions of widely-used operators and their PIT-axes that support shuffling their indexes without 
 affecting correctness.\label{tab:expression}}
\end{table}
\subsection{PIT Policy}\label{sec:pit_policy}
\revise{\sysname{} defines} a series of rules working on a certain tile axis that can correctly transform micro-tiles along this axis into GPU-efficient dense tiles. Specifically, a PIT rule contains the combination of a PIT-axis, a micro-tile shape, and a dense computation tile. Following a PIT rule, the system applies \sload{}/\swrite{} on the PIT-axis, loading/writing multiple sparsely located micro-tiles on this axis into/from the dense computation tile. A PIT rule ensures the computation on the PIX-axis must satisfy the permutation invariant property~\cite{lee2019set}. 

For ease of exposition, we use Einstein summation (einsum) notation~\cite{Einstein} to express operations along tensor axes. \autoref{tab:expression} lists some common operators in deep learning and their corresponding einsum notations. An axis of an einsum notation is PIT-axis if and only if any shuffling of  indexes on this axis does not affect the correctness of the operator. The following theorem finds all PIT-axes of an operator.
\begin{theorem}\label{theorem:commutative}
    An axis is called PIT-axis, if and only if all computations on the axis are commutative and associative.
\end{theorem}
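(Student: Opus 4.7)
The plan is to prove the biconditional by separately treating the two kinds of axes that appear in an einsum expression: a \emph{reduction axis} (appearing only on the right-hand side, e.g.\ $k$ in $C[m,n]\mathrel{+}=A[m,k]\cdot B[k,n]$) and a \emph{free axis} (appearing on both sides, e.g.\ $m$ or $n$). For a free axis, shuffling the indexes simply co-permutes the corresponding slices of the output tensor; no aggregation happens along the axis, so correctness is trivially preserved and the commutative/associative condition is vacuous because the induced ``operation'' is just the independent juxtaposition of per-index computations. The real content of the theorem therefore concerns reduction axes, and most of the proof would be spent there.

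For a reduction axis $\alpha$ equipped with its aggregation operator $\oplus$ (summation for the examples in \autoref{tab:expression}, but the argument should be stated generically), PIT along $\alpha$ amounts to the statement that $\bigoplus_{i=1}^{n} x_i$ is invariant under every reordering of its operand sequence $(x_1,\dots,x_n)$, over all choices of $x_i$ that can arise as slices of the operand tensors along $\alpha$. I would first prove the backward direction: assume $\oplus$ is commutative and associative, write an arbitrary permutation $\pi\in S_n$ as a product of adjacent transpositions, and verify that each adjacent transposition preserves $\bigoplus_i x_i$ by re-bracketing through associativity to isolate the swapped pair and then applying commutativity. A short induction on the number of transpositions closes this direction.

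For the forward direction I would specialize the PIT hypothesis to test inputs that force the desired algebraic laws. Taking $n=2$ and swapping the two operands directly yields $x_1\oplus x_2 = x_2\oplus x_1$, giving commutativity. For associativity I would take $n=3$ and use the freedom in PIT-permutations to compare the reductions obtained for orderings that force the evaluation $(x\oplus y)\oplus z$ against $x\oplus(y\oplus z)$; combined with commutativity this pins down associativity on every triple, and since the reduction of $n$ operands is built by iterating the binary $\oplus$, associativity of the binary operation lifts to permutation invariance of the full reduction.

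The main obstacle will be uniformly formalizing ``all computations on the axis'' so that both free and reduction axes fit into a single clean statement, and making precise what ``does not affect correctness'' means for free axes, whose shuffling must induce a matching permutation on the output. My plan is to attach to each axis $\alpha$ a canonical operator $f_\alpha$ — the identity/tuple constructor for free axes, and the explicit aggregator for reduction axes — and to show that PIT along $\alpha$ is equivalent to $f_\alpha$ being permutation invariant in its arguments, at which point the classical equivalence between permutation invariance of an $n$-ary operation and commutativity plus associativity of its underlying binary operation completes the argument.
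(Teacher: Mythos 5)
The paper itself offers no written proof (it declares the theorem ``mostly self-evident'' and only sketches the classification: spatial axes and commutative/associative reduction axes are PIT, while axes that derive new axes are not), so your attempt must be judged against that intended classification --- and there it has a genuine gap. Your dichotomy ``free axis $\Rightarrow$ trivially PIT'' and ``reduction axis with commutative, associative aggregator $\Rightarrow$ PIT'' is refuted by the convolution entry of \autoref{tab:expression}: in $C[n,f,x,y] \mathrel{+}= A[n,m,x+i,y+j]\cdot B[f,m,i,j]$ the axis $x$ appears on both sides (a ``free'' axis in your sense) and $i$ appears only on the right with plain summation as its aggregator, yet the paper explicitly lists neither $x$, $y$, $i$, nor $j$ as a PIT-axis, precisely because these indexes enter the derived expressions $x+i$ and $y+j$. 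Under your scheme $x$ would be declared PIT ``trivially'' and $i$ would be declared PIT because $+$ is commutative and associative, both of which are wrong: shuffling $i$ (without a compensating shuffle of $x$) changes which elements of $A$ are paired with which elements of $B$, so the result changes even though the fold operation itself is perfectly commutative and associative. The difficulty is not the algebra of the aggregator but the \emph{coupling of the axis into the index arithmetic of other tensors}, and this is exactly the case your formalization --- attaching to each axis a standalone operator $f_\alpha$ and invoking the classical equivalence between permutation invariance of an $n$-ary fold and commutativity plus associativity of its binary operation --- cannot see.

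Concretely, two repairs are needed. First, the notion of ``computations on the axis'' must be formalized so that it includes how the axis index is consumed by the access functions of all operands (the paper's informal phrase ``axes that derive new axes''); a permutation of axis $\alpha$ must be required to commute with every access expression in which $\alpha$ occurs, which rules out $x$, $y$, $i$, $j$ in convolution before any algebraic argument starts. Second, your forward (``only if'') direction must produce, from a failure of this structural condition, an explicit input on which some permutation changes the output --- e.g.\ for the convolution $i$-axis, two inputs whose products $A[n,m,x+i,y+j]\cdot B[f,m,i,j]$ are matched differently after the shuffle. Your adjacent-transposition argument for the backward direction on genuine reduction axes, and your observation that spatial-axis permutations need a matching inverse permutation on the output, are fine as far as they go; the missing idea is the treatment of axes entangled in derived index expressions, which is where the theorem's ``only if'' actually has content.
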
 
\noindent The theorem is mostly self-evident, and we omit the proof due to space constraints. The commutative and associative property guarantees the correctness of random shuffling of micro-tiles on the PIT-axis and allows the parallel processing of micro-tiles in any order. 
First of all, the axes that derive new axes are not PIT-axes. \revise{For example, axes of $x$, $i$, $y$, and $j$ in the convolution operator (\autoref{tab:expression}) are not PIT-axes because they are not commutative due to the new axes (``$x+i$'' and ``$y+j$'') derived by them. In the rest axes, there are two types of axes that are PIT-axes. }
An axis in the output tensor is called a \emph{spatial axis}. All spatial axes are PIT-axes since they only change the data layout. 
An axis not in the output tensor is called a \emph{reduction axis}, whose computations are mostly commutative and associative (\eg sum, multiply, max, min), and thus PIT-axes. 
\autoref{tab:expression} also summarizes the PIT-axes of the listed operators. For every operator used by a model, \sysname{} uses its tensor expression to find all PIT-axes based on the type of an axis, and whether it involves non-commutative or non-associative computations. We find 
 only using one PIT-axis is general enough to cover most dynamic sparsity in deep learning. Although we do identify more complex PIT rules, \eg permutation over multiple axes in $(b,m)$-axes or $(b,n)$-axes in BatchMatMul, we leave them to future works due to the limited space.

\para{Micro-tile and Kernel Selection.} 
\sysname{} creates a database of sparse kernels, each of which applies PIT transformations on one PIT-axis of an operator. 
Each sparse kernel defines a micro-tile shape, which is determined by the PIT-axis and the tile shape of the dense kernel it uses for computation. 
When the memory layout of the sparse tensor is not contiguous on the PIT-axis, we set the shape of micro-tiles to 1 on the PIT-axis while keeping the shape of other axes the same as the tile shape of the dense kernel. This allows GPUs to load/write the values along the PIT-axis in parallel which can saturate the memory transaction. For example, consider a dense matrix-multiplication kernel with a tile size of $[M, K]\times[K, N]$, suppose the first input tensor is sparse and stored in the row-major memory layout (\ie contiguous on $K$-axis in memory). If $M$ is the PIT-axis, the micro-tile size will be $[1, K]$. If the memory layout of the sparse tensor is contiguous on the PIT-axis, we need to first change its format to make the data non-contiguous on the PIT-axis, \eg from row-major to column-major, to saturate the memory transaction. This can be done in a piggyback manner at the output of previous operators generating this sparse tensor, thus its overhead is negligible.

\begin{algorithm}[t]
\caption{Kernel selection for a dynamic sparsity operator.} \label{alg:selection}
\KwData{$Op$: A dynamically sparse operator,\newline
        $D_{sparse}$: A list of $n$ sparsity samples of $Op$.}
\KwResult{$Best$: The best computation tile for $Op$.}
\SetKwProg{Fn}{Function}{:}{}

\Fn{KernelSelection($D_{sparse}$, $Op$)}{ \label{alg:dynamic_opt}
    $Best$ = null; $Cost_{optimal}$ = inf\;
    \ForEach{$T \in$ GetTilesFromTileDB($Op$)}{ \label{line:get_tiles}
        \ForEach{$A \in$ GetPITAxis($Op$)}{ \label{line:get_pit_axes}
            $Cost$ = 0\;
            $micro\_tile$ = $GetMicroTile(T.SparseTensor, A)$\;  \label{line:get_micro_tile}
            \ForEach{$D \in D_{sparse}$}{
                $Num_{tiles}$ = CoverAlgo($D$, $micro\_tile$, $A$)\; \label{line:coveralgo}
                $Cost$ += $Num_{tiles} * T.tile\_cost$\;
            }
            \If{$Cost$ < $Cost_{optimal}$}{
                $Best$ = $S$\;
                $Cost_{optimal}$ = $Cost$\;
            }
        }
    }
    \KwRet $Best$\;
}
\end{algorithm}
 \autoref{alg:selection} shows how \sysname{} selects the appropriate PIT-axis, micro-tile shape, and dense computation tile to generate the sparse kernels in a JIT manner.
 \sysname{} iterates through all dense computation tiles and the PIT-axes of the operator (\autoref{line:get_tiles}-\autoref{line:get_pit_axes}). \texttt{GetTilesFromTileDB} returns all possible dense computation tile shapes with efficient GPU kernels, which are usually provided by existing DL compilers or implementation, \eg TVM~\cite{tvm}, OpenAI Block Sparse~\cite{openai_block}. \texttt{GetPITAxis} returns all feasible PIT-axes of the operator we defined in \autoref{theorem:commutative}. For each dense computation tile and PIX-axis, 
 \texttt{GetMicroTile} finds the valid micro-tile shape as we elaborated above (\autoref{line:get_micro_tile}). \texttt{CoverAlgo} (\autoref{line:coveralgo}) calculates the number of micro-tiles required to cover all non-zero values in the sparse tensor (denoted as $Num_{tiles}$). The time cost of the generated sparse kernel is estimated as $Num_{tiles}\times T.tile\_cost$, where $T.tile\_cost$ is the running time of the corresponding sparse kernel via offline profiling. \sysname{} selects the sparse kernel with the lowest time cost and its corresponding micro-tile to perform sparse computation. 
 \revise{As for the input whose sparsity ratio is relatively low, this kernel selection algorithm makes \sysname{} seamlessly fall back to the dense computation.}

 \revise{The offline profiling for the tile cost is lightweight. As \sysname{} chooses to merge micro-tiles into a dense tile along a certain PIT-axis on-the-fly, it allows the offline profiling to be done in \textit{a model, tensor shape, and sparsity pattern agnostic way}. \sysname{} just records the execution time of different tile shapes (\eg 32x32 and 64x64) for dense computation. Therefore, the offline profiling is conducted once per operator and per GPU type, which is very lightweight compared to long-running inference services.}

\subsection{Online Sparsity Detection}\label{sec:fast_index}
\revise{Efficient sparse computation requires the online detection of the changing sparsity pattern and computes only the non-zero values in sparse tensors. This implies that the index for the non-zero values should be constructed on-the-fly. 
However, creating sparse indexes on-the-fly can be challenging. As illustrated in \autoref{fig:convert_computation}, cuSPARSE's conversion overhead can be significantly higher than computation time, particularly when sparsity is high.}

\revise{
We propose an effective mechanism for online index construction, \ie constructing a sparse index at the granularity of \textit{micro-tile} in an \textit{unordered} manner. First, with the selected micro-tile, \sysname{} detects non-zero values at the granularity of that micro-tile, which greatly reduces the size of the sparse index. Second, the PIT transformation enables sparsity detection and index construction to be highly concurrent in an out-of-order manner: the PIT-axis allows \sysname{} to perform computations when an axis is permuted (\S\ref{sec:pit_policy}). With this transformation, \sysname{} no longer needs an ordered index along a specific axis. This substantially reduces constraints during the micro-tile based index construction, minimizing synchronization overheads across accelerator threads.}

\revise{Specifically, the index construction task runs on the accelerator (\eg GPU) with the construction task divided into tiles. 
For each tile, the task traverses a region of the sparse tensor and checks for the micro-tiles containing non-zero values. When a non-zero micro-tile is detected, its index (i.e., the offset within the tensor) is written to a pre-allocated index array. 
As multiple tiles concurrently update the index array with indexes of non-zero micro-tiles, they use \texttt{atomicadd} to determine unique positions in the array for recording these indexes, ensuring the safety of the update. 
Consequently, the resulting index arrangement for micro-tiles is unordered due to the unpredictable scheduling order of thread blocks.}

\revise{Moreover}, \revise{unlike existing sparse solutions (\eg cuSPARSE, MegaBlocks),
\sysname{} constructs} sparse indexes without changing the storage format of sparse tensors. At runtime, \revise{\sload{} and \swrite{} in sparse kernels use the index to load and rearrange the non-zero values directly from and to the original sparse tensors, at the micro-tile granularity}. This significantly reduces memory access overheads introduced by data format conversion (\eg from sparse data in a dense tensor to CSR format) and achieves zero-copy data rearrangement.
\section{Implementation}\label{sec:implementation}

We implement \sysname{} and integrate it with PyTorch~\cite{pytorch}. It consists of approximately 13,000 lines of C++ and CUDA code, and 5,600 lines of Python. PyTorch is a popular open-source DNN framework that supports various dynamic sparsity algorithms. 
\sysname{} has generated approximately 1,500 sparse kernels by applying these PIT transformation rules to over 500 dense computation kernels, which include manually optimized kernels (such as OpenAI Block Sparse~\cite{openai_block}), hardware instruction accelerated kernels (\ie wmma~\cite{wmma}), and dense kernels generated by compilers like TVM~\cite{tvm}. These sparse kernels are stored in a database, and a performance look-up table is created in advance. 
This profiled performance is then used to guide online micro-tile selection. Although offline profiling takes several hours, it is done only once and can be accelerated by parallel profiling on multiple devices.

With the extensibility of dynamic sparsity optimizations, \sysname{} have supported 31 Natural Language Processing (NLP) models. Thirteen (13) of them are large language models including OPT, T5, and Switch Transformer. Additionally, \sysname{} supports 8 dynamic sparse attention models, 8 MoE models, and 13 sparse training algorithms. Integration of \sysname{} is facilitated by its ability to accommodate minimal code modifications - with less than 10 lines of code changed in all evaluated scenarios. This feature enables swift adaptation of existing models for dynamic sparsity optimization, providing users with enhanced efficiency.

\section{Evaluation}\label{sec:eval}

\begin{table}[t]
    \small
    \centering
    \setlength{\tabcolsep}{1mm}
    \begin{tabular}{l|c|c|c|c}
    \toprule
    Models & Datasets & \makecell[c]{Model\\Structure} & Precision & Devices \\
    \midrule
    \hline
    \makecell[l]{Switch\\Transformers\cite{fedus2021switch}} & MNLI~\cite{wang2018glue} & \makecell[c]{Encoder\\Decoder\\ MoE}  & fp16,fp32 & A100 \\
    \hline
    \revise{Swin-MoE~\cite{hwang2022tutel}} & \revise{ImageNet} & \makecell[c]{\revise{Encoder}\\ \revise{MoE}} & \revise{fp16} & \revise{A100} \\
    \hline
    OPT~\cite{zhang2022opt} & Alpaca~\cite{alpaca} & Decoder & fp32 & V100 \\
    \hline
    BERT~\cite{bert} & \makecell[c]{GLUE~\cite{wang2018glue}, \\News~\cite{fabbri2019multi} etc.} & Encoder & fp32 & V100 \\
    \hline
    Longformer~\cite{longformer} & Arxiv~\cite{cohan2018discourse} & Encoder & fp32 & V100 \\
    \hline
    MuseFormer~\cite{museformer} & LMD~\cite{Raffel2016LearningBasedMF} & Decoder & fp32 & V100 \\
    \bottomrule
    \end{tabular}
    \caption{Models and datasets in the evaluation. }
    \label{tab:inference_models}
\end{table}

\revise{In this section we present comprehensive experiments to demonstrate the effectiveness of \sysname{} from various perspectives. Specifically, we first evaluate the end-to-end inference performance of \sysname{} with six representative models on both A100 and V100 (shown in \S\ref{sec:eval_end_2_end}). We also show the end-to-end performance of \sysname{} across two distinct training scenarios (detailed in \S\ref{sec:eval_end_2_end_train}). Furthermore, micro-benchmarks are performed to highlight the effectiveness of PIT transformation (\S\ref{sec:eval_effect_stile}) and the conversion overheads (\S\ref{sec:eval_convert}). Finally, we show the effectiveness of micro-tile online searching (\S\ref{sec:eval_online_search}) and detection of changing sparsity patterns (\S\ref{sec:eval_dynamic_sparse_pattern}). In summary,} our results show that: 

\begin{figure*}[t]
  \centering
  \subfloat[Latency]{
    \label{fig:exp_moe_latency}
    \includegraphics[width=2\columnwidth]{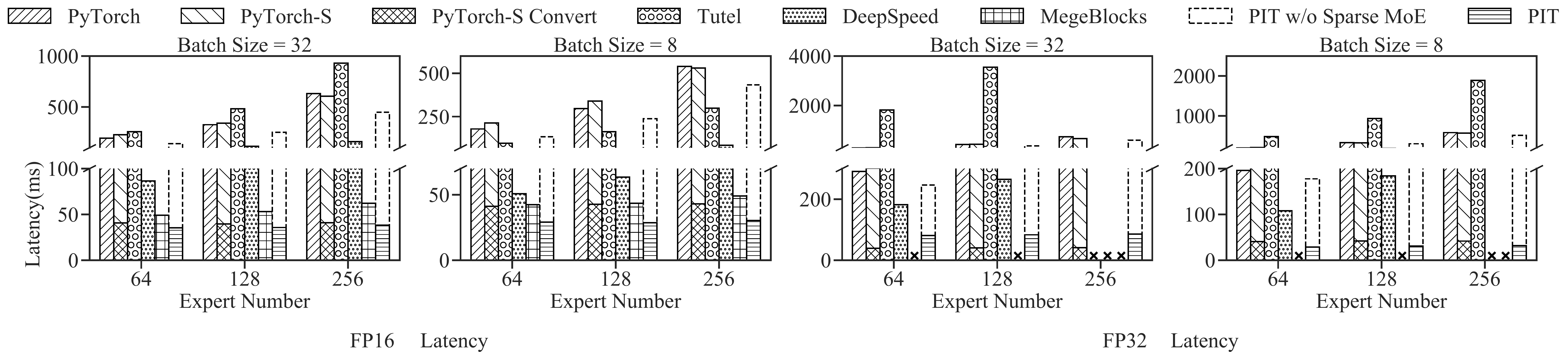}} \\
  \subfloat[GPU Memory]{
    \label{fig:exp_moe_mem}
    \includegraphics[width=2\columnwidth]{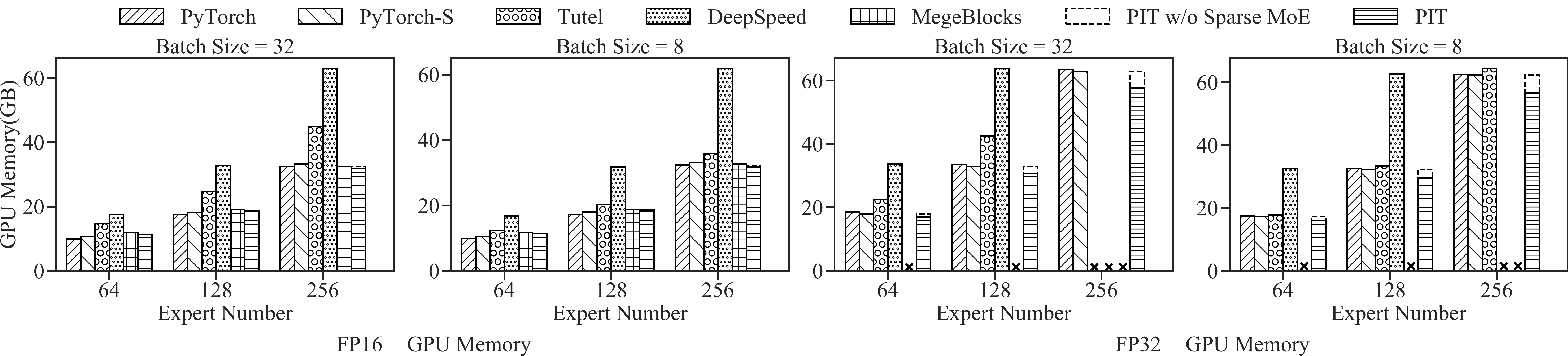}}
\vspace{0.2cm}
  \caption{End-to-end latency per batch and memory footprints of Switch Transformer.}
  \label{fig:exp_moe}
\end{figure*}

\begin{itemize}[leftmargin=*]
    \item \sysname{} achieves significant inference latency reduction and smaller memory footprints on five representative models, outperforming PyTorch, PyTorch sparse\footnote{The state-of-the-art sparse kernels wrapped in PyTorch.}, Tutel, DeepSpeed, MegaBlocks, and TurboTransformer by up to 18.1x, 17.8x, 59.1x, 5.9x, 1.6x, and 1.9x respectively (\S\ref{sec:eval_end_2_end}).
    \item \sysname{} also boosts the training efficiency significantly. Compared to the state-of-art solutions, PIT achieves up to 1.8x speedup for the OPT training, and up to 2.4x speedup for the sparse training (\S\ref{sec:eval_end_2_end_train}).
    \item With PIT transformation, \sysname{} outperforms the state-of-art sparsity optimizations. Specifically, \sysname{} achieves up to 88.7x, 5.8x, 17.5x, and 5.7x speedup over cuSPRARSE, Sputnik, OpenAI Block Sparse, SparTA respectively (\S\ref{sec:eval_effect_stile}).
    \item \revise{\sysname{} can detect dynamic sparsity online with negligible overheads and achieves up to 4.7x speedup over previous state-of-art works when constructing the sparse index online (\S\ref{sec:eval_convert}). }
\end{itemize}

\subsection{End-to-End inference}
\label{sec:eval_end_2_end}

We compare \sysname{} with state-of-the-art dense and sparse baselines on inference latency and memory usage for \revise{six} representative models as shown in \autoref{tab:inference_models}. The baselines include the most popular deep learning framework (PyTorch v1.11.0), two inference frameworks optimized for large-scale models (DeepSpeed~\cite{deepspeed} and TurboTransformers~\cite{turbo}), and model-specific optimization techniques (Tutel and MegaBlocks~\cite{megablocks} for MoE models and Longformer-S for the Longformer model). We also create PyTorch-S, a variant of PyTorch that uses the best-performing sparse kernels from cuSPARSE (v11.6)~\cite{cusparse}, Sputnik~\cite{sputnik}, and Triton~\cite{triton}. We select the best result among these sparse kernels for each model as the final performance of PyTorch-S. TurboTransformers only supports the BERT model and fails to run other models due to missing operators.

\para{Switch Transformer.}
We evaluated the performance of \sysname{} on Switch Transformer, a large language model \revise{that consists of Encoder, Decoder, and MoE structure at the same time.} We measured the latency and memory usage of \sysname{} on 1x A100-80GB GPU with different precisions, namely float32 and float16. The MoE layer of Switch Transformer assigns each token to one of the experts, which may produce an uneven token distribution among the experts.
We compare \sysname{} with several baselines: including PyTorch, which executes experts sequentially; Tutel and DeepSpeed, which use BatchMatmul to fuse all experts for parallel execution; MegaBlocks, which leverages sparse kernels to execute all experts simultaneously after reorganizing tokens in a sparse format. \revise{MegaBlocks only provides GPU kernels of float16 precision and thus is not evaluated in float32.}

\begin{figure*}[t]
  \centering
  \includegraphics[width=2\columnwidth]{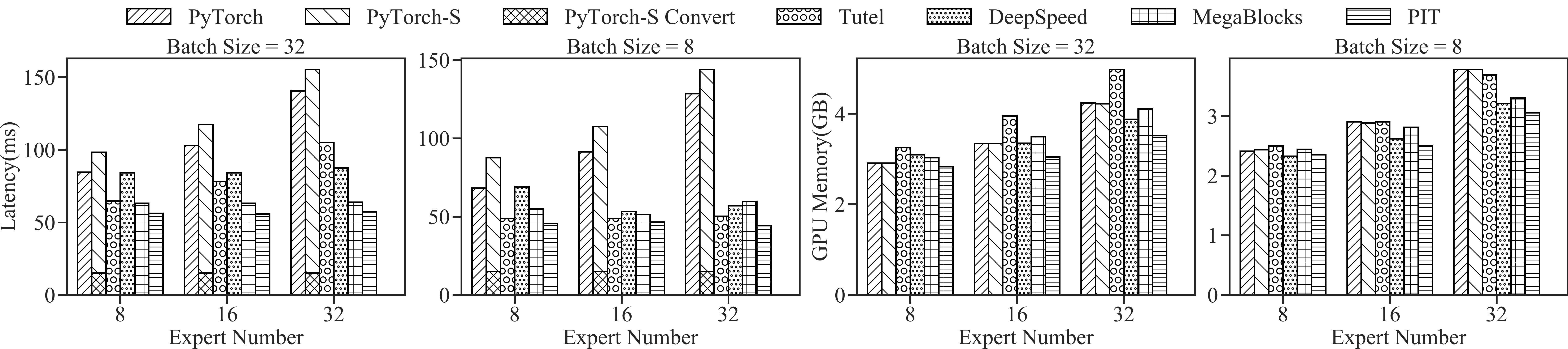}
\vspace{0.1cm}
  \caption{\revise{End-to-end latency and memory footprints of SwinMoE on A100.} } 
  \label{fig:exp_swinmoe}
\end{figure*}

In the non-MoE layers, \sysname{} optimizes dynamic sparsity caused by varying sequence lengths within the same batch. In the MoE layers, \sysname{} employs \sload{} to load the relevant tokens for each expert, sparsely computes their assigned tokens, and writes the results directly to the corresponding positions using \swrite{}.
We evaluated \sysname{} using the MNLI dataset in GLUE~\cite{wang2018glue}. \autoref{fig:exp_moe} shows end-to-end inference latency and memory cost of Switch Transformer with varying batch sizes and numbers of experts. As shown in \autoref{fig:exp_moe_latency}, 
\sysname{} outperformed other methods by allowing sparse calculation for all experts without computation waste. Compared to PyTorch, PyTorch-S, Tutel, DeepSpeed, \sysname{} achieved 3.6x$\sim$18.1x, 3.7x$\sim$17.8x, 16.6x$\sim$59.1x, 2.3x$\sim$5.9x speedup respectively when the precision is float32. For the precision of float16, the speedup was 5.5x$\sim$17.8x, 6.4x$\sim$17.5x, 3.3x$\sim$24.3x, 1.8x$\sim$4.2x, and 1.4x$\sim$1.7x compared to PyTorch, PyTorch-S, Tutel, DeepSpeed, and MegaBlocks, respectively. \revise{Compared to Tutel and DeepSpeed, \sysname{} avoids computational waste brought by the BatchMatmul, which requires padding the input of all experts to the same length.} 
\revise{Moreover, compared to MegaBlocks, \sysname{} uses  \sload{} and \swrite{} to eliminate the expensive data reorganization cost during input preparation.} ``PyTorch-S Convert'' highlights the sparse index construction overhead of PyTorch-S. Even though the computation in PyTorch-S has become faster, the cost of constructing sparse indices has neutralized the speed gains. \revise{To dissect the benefit under sparse MoE and varying sequence length, we also evaluated \sysname{} without applying dynamic sparse MoE optimization (\ie ``PIT w/o Sparse MoE'' in \autoref{fig:exp_moe}}. \revise{We find \sysname{}'s performance gain on Switch Transformer mainly comes from optimizing the dynamic sparsity in the MoE structure.}
We also evaluated the GPU memory usage shown in \autoref{fig:exp_moe_mem}. \sysname{} has the lowest memory usage compared to the baselines. \revise{Due to excessive padding when increasing the batch size and number of experts, Tutel and DeepSpeed run into Out-of-Memory (OOM).} 

\para{\revise{Swin-MoE}}
\revise{We assessed the performance of \sysname{} on Swin-MoE, a large vision model with both Encoder and MoE structures. We measure the latency and memory usage of \sysname{} on A100 under the float16 precision. For vision transformers, the input images within the same batch will be rescaled to the same resolution to achieve a consistent sequence length. In our experiments, shown in \autoref{fig:exp_swinmoe}, we compared Swin-MoE's end-to-end inference latency and memory footprints across different batch sizes and numbers of experts. 
MegaBlocks outperforms other baselines due to its simultaneous execution of all experts, efficiently utilizing sparse kernels to avoid computational waste. Compared to MegaBlocks, \sysname{} further improves the performance by piggy-backing data reorganizations in the data movement across memory hierarchies. Compared to PyTorch, PyTorch-S, Tutel, DeepSpeed, MegaBlocks, \sysname{} achieves 1.5x$\sim$6.3x, 1.5x$\sim$2.9x, 1.1x$\sim$1.8x, 1.2x$\sim$1.6x, 1.1x$\sim$1.4x speedup, respectively. \sysname{}'s performance improvement for Swin-MoE is less than that for Switch Transformer because the number of experts in Swin-MoE is significantly fewer than that in Switch-Transformer. As a result, the MoE layers only contribute 23.6\% to 61.2\% of the end-to-end latency when the number of experts varies from 8 to 32. When comparing the latency of the MoE layers alone, PIT is approximately 1.2x$\sim$1.7x faster than Megablocks. For a similar reason, \sysname{} has a similar GPU memory usage compared to the baselines as shown on the right side of \autoref{fig:exp_swinmoe}. }

\begin{figure}
  \centering
  \includegraphics[width=1\columnwidth]{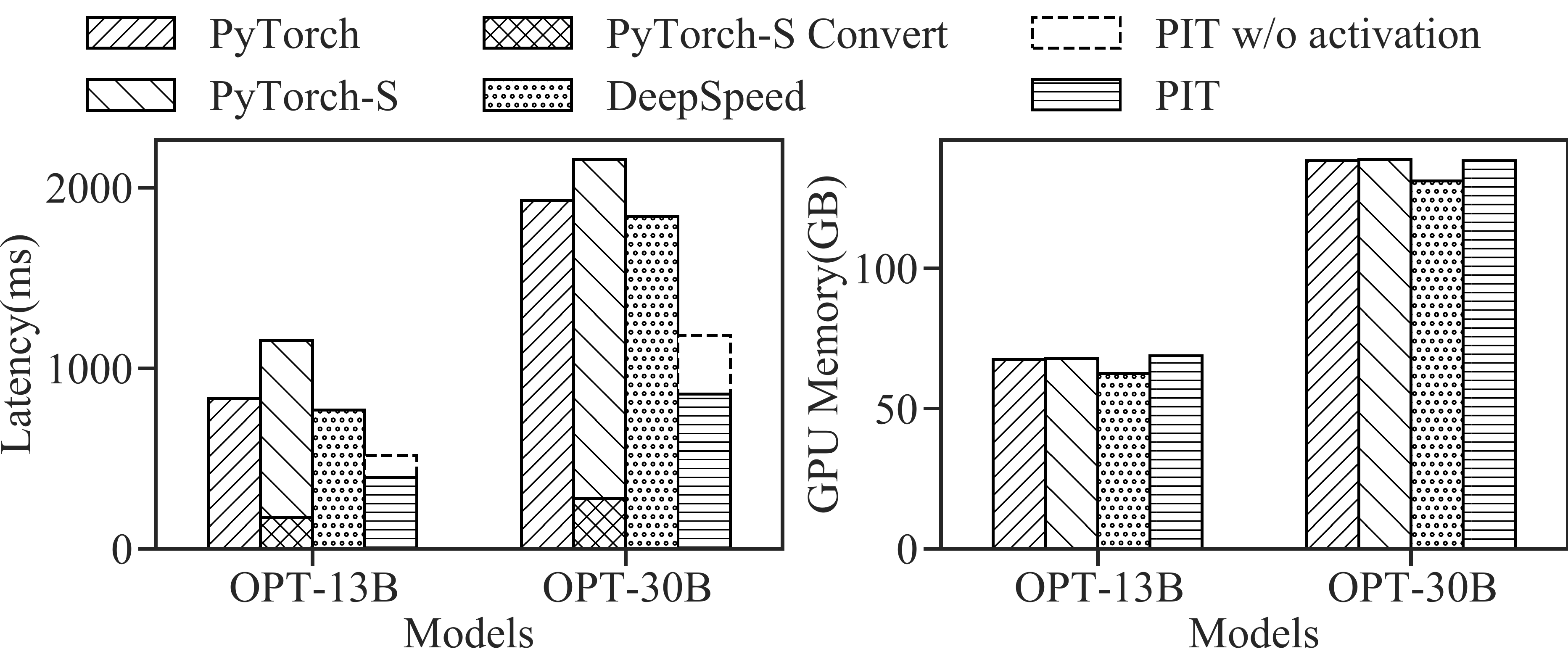}
\vspace{0.1cm}
  \caption{End-to-end latency per batch and memory footprints of OPT.}
  \vspace{-0.3cm}
  \label{fig:exp_opt}
\end{figure}

\para{OPT}
is a decoder-only large language model~\cite{zhang2022opt}. We evaluated two versions with 13B and 30B parameters with the Alpaca~\cite{alpaca} dataset on eight V100-32GB GPUs. \sysname{} applies two dynamic sparsity optimizations on OPT: (1) eliminating the padding overhead from sentences with varying lengths in the same batch and (2) exploiting the fine-grained sparsity (up to 99\%) created by the ReLU activation in the FFN layer. The batch size is set to 32. PyTorch-S uses Triton as the backend.

\autoref{fig:exp_opt} compares the end-to-end latency and memory footprints of the OPT. \sysname{} outperforms PyTorch, PyTorch-S, and DeepSpeed by 2.1x$\sim$2.3x, 2.5x$\sim$3.0x and 2.0x$\sim$2.2x, respectively. \revise{The benefit of avoiding padding in dynamic sequences helps \sysname{} to achieve 1.6x$\sim$1.7x speedup against the baselines (\ie \sysname{} w/o activation in \autoref{fig:exp_opt}). By further exploiting the dynamic sparsity in the ReLU activation of FFN layers, \sysname{} further boosts the performance by 1.3x$\sim$1.4x.} 
In contrast to PyTorch-S, which uses Triton block sparse kernel of block size 32x32, \sysname{} performs efficient computations using smaller micro-tile (\ie 1x32) with \sload{} and \swrite{}, thus avoiding computation waste. \revise{Also, PyTorch-S suffers from the sparse format conversion overhead and thus has the highest latency.} In terms of memory consumption, DeepSpeed has the lowest memory usage as it fuses the entire encoder layer into one operator and saves activation memory. \sysname{} has a memory footprint similar to the other baselines. 

\begin{figure*}[t]
  \centering
  \includegraphics[width=2.1\columnwidth]{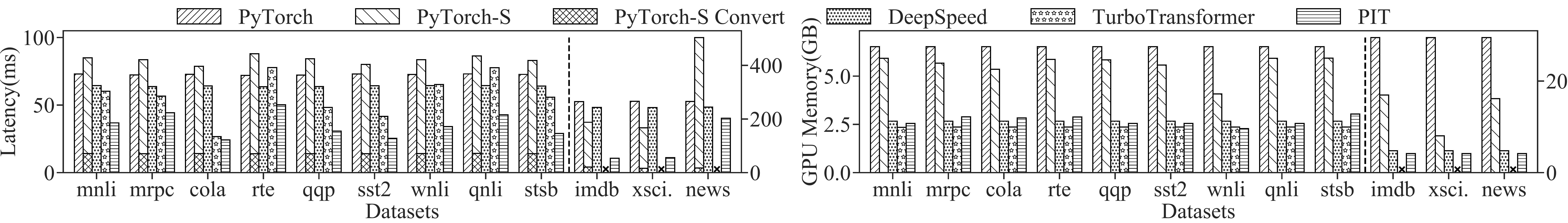}
\vspace{0.1cm}
  \caption{End-to-end latency and memory footprints of BERT on V100.}
  \label{fig:exp_dynamic_seq}
\end{figure*}


\para{BERT.}
We also tested \sysname{} on BERT-base~\cite{bert}, an encoder-only language model, using Float32 on a single NVIDIA V100-32GB.
In BERT, \sysname{} only optimizes the dynamic sparsity caused by the different sequence lengths in the same batch. To evaluate its performance more comprehensively, we experimented with various datasets, including GLUE~\cite{wang2018glue} (with mnli, mrpc, cola, rtc, qqp, sst2, wnli, qnli, stsb), IMDB~\cite{maas2011learning}, Multi-XScience~\cite{lu2020multi}, and Multi-News~\cite{fabbri2019multi}. We add TurboTransformer~\cite{turbo} as a baseline, an inference framework that is specifically optimized for variable input sequence lengths by using smart dynamic batching. We used a batch size of 32. PyTorch-S employed Triton as the sparse backend library. 

As \autoref{fig:exp_dynamic_seq} illustrates, \sysname{} outperforms PyTorch, PyTorch-S, DeepSpeed, and TurboTransformer by a factor of 1.3x to 4.9x, 1.8x$\sim$3.5x, 1.2x$\sim$4.5x, and 1.1x$\sim$1.9x, respectively. 
\revise{PyTorch-S performs poorly when the sequence lengths are short (\eg no more than 128 tokens in GLUE). Because its backend (Triton) requires a coarse-grained sparsity (\ie 32 tokens in Triton's kernel), PyTorch-S needs to pad the input sequence to multiple of 32. This incurs a high waste when the sequence lengths are short (\eg a sequence of 16 tokens has to be padded to 32 causing a waste of 50\%).
TurboTransformer outperforms the other baselines by dividing the input into multiple small batches based on sentence lengths and processing them sequentially to avoid waste. \sysname{} further outperforms TurboTransformer by processing the whole batch in parallel without waste.} On memory usage, \sysname{} consumes less memory than PyTorch and PyTorch-S, and similar memory to DeepSpeed and TurboTransformer. DeepSpeed and TurboTransformer optimize memory usage by fusing the entire layer into a single operator to reduce activation memory, \revise{which is compatible with \sysname{}'s sparsity optimizations. } TurboTransformer crashes when the input sequence length increases due to kernel implementation issues.

\begin{figure}[t]
  \centering
  \includegraphics[width=1\columnwidth]{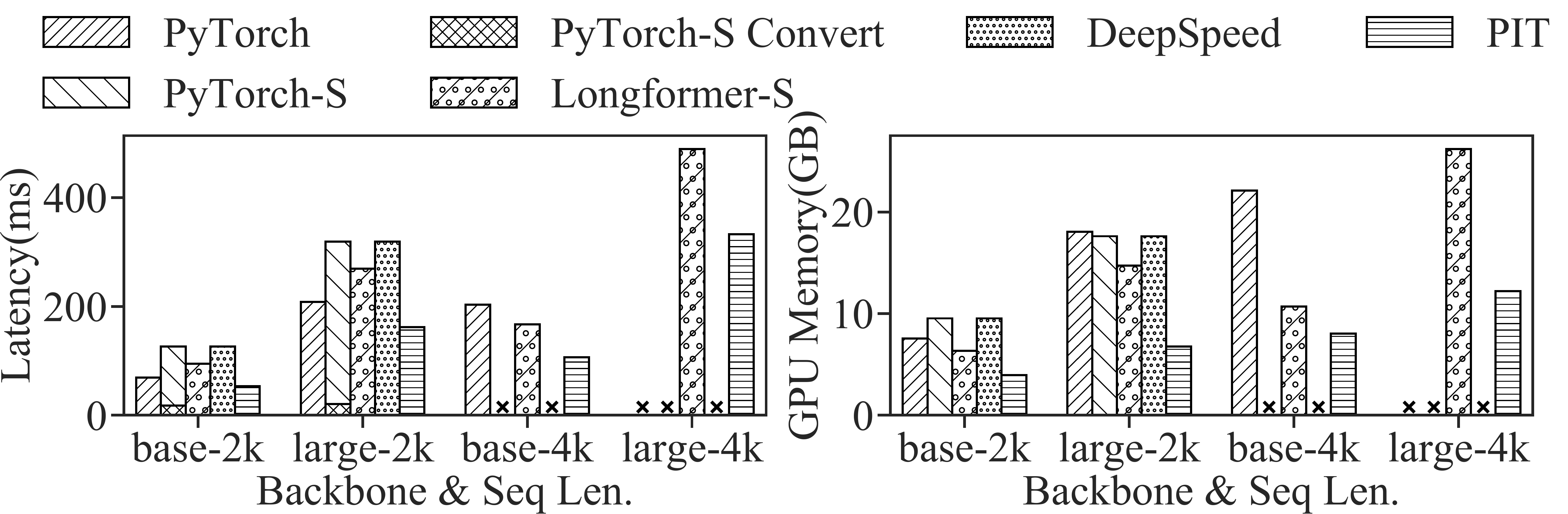}
\vspace{0.1cm}
  \caption{End-to-end inference latency and memory footprint of Longformer on V100.}
  \label{fig:exp_longformer}
\end{figure}

\para{Longformer}
is an encoder-only language model with dynamic sparse attention~\cite{longformer}. Longformer adaptively pays attention to several important words (\eg class token) of the input. The position of dynamic attention varies for different inputs, \revise{which is the source of dynamic sparsity.} \autoref{fig:exp_longformer} illustrates Longformer's inference latency and memory cost for input sequence lengths of 2048 and 4096. 
\sysname{} optimizes the dynamic sparsity in the dynamic sparse attention.
To evaluate comprehensively, we also add the sparse implementation specifically optimized for the Longformer (represented by Longformer-S~\cite{longformer2020code}). PyTorch-S selects Triton as the backend. 
\autoref{fig:exp_longformer} shows the latency and GPU memory usage on a single V100. \sysname{} is faster than PyTorch, Longformer-S, PyTorch-S, and DeepSpeed by up to 1.9x, 1.8x, 2.4x, and 2.4x, respectively. Longformer-S outperforms PyTorch-S because of its specifically optimized GPU kernels for its designed sparsity pattern through sparse pattern decomposition. However, its design is hard to be used by other models. DeepSpeed uses Triton to implement their sparse attention, so it has a similar performance to PyTorch-S.
\revise{The index construction overhead \revise{of PyTorch-S} is shown in ``PyTorch-S Convert'', which accounts for 6.3\% $\sim$13.9\% of the end-to-end latency. } 
Moreover, PyTorch-S performs even worse when it selects the fine-grained sparse library as the backend because the sparsity ratio is not high enough. When PyTorch-S selects Triton (Block Sparse) as the backend, it is slower than \sysname{} due to the wasted computation caused by the dynamic global attention\cite{longformer}.
\revise{Longformer-S is more efficient since it has no computation waste by rearranging the input tensor}, but it introduces large data rearrangement overheads. In contrast, \sysname{} organizes the small micro-tiles on the fly with negligible overheads and computes them in an efficient dense computation tile directly without computation waste using \sload{} and \swrite{}.
As for memory usage, \sysname{} uses the least memory. PyTorch-S \revise{and DeepSpeed} crashed due to out-of-memory when the input sequence length reached 4096. Both PyTorch-S and DeepSpeed have to use block sparse ($32\times 32$ in Triton) to cover all remaining values, leading to computation waste and a higher sparsity ratio. Longformer-S introduces extra memory cost due to its data re-arrangement, which creates many temporary intermediate tensors.

\begin{figure}[htbp]
  \centering
\includegraphics[width=1\columnwidth]{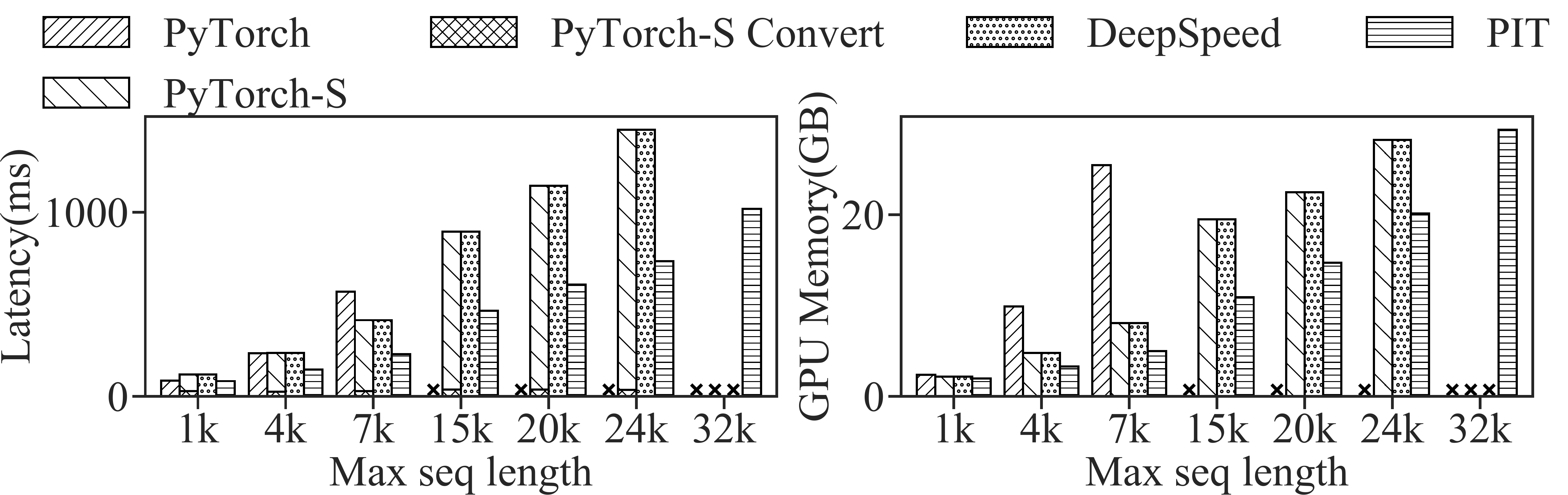}
  \caption{End-to-end inference latency and memory footprint of Museformer on V100.}
  \label{fig:exp_museformer}
\end{figure}

\para{Museformer} is a decoder-only language model that also generates the dynamic sparsity pattern according to the input data to improve model performance~\cite{museformer}. \autoref{fig:exp_museformer} shows the inference latency and memory footprint under different input sequence lengths. \sysname{} is 2.5x, 2.0x, and 2.0x faster than PyTorch, PyTorch-S, and DeepSpeed respectively before they crash due to out-of-memory. The time of sparse index construction accounts for up to 23.2\% of the end-to-end latency in PyTorch-S for short sequences. As the input length increases, the amount of calculation becomes larger, and the time proportion of index construction can be gradually diluted.
As for memory usage, \sysname{} shows the lowest memory footprint. PyTorch consumes much more memory because it cannot understand and optimize the dynamic sparsity. Compared to PyTorch-S and DeepSpeed, \sysname{} reduces computation waste by PIT transformation, resulting in lower memory consumption.

\subsection{End-to-End training}\label{sec:eval_end_2_end_train}
In this section, we evaluate \sysname{} on both NVIDIA A100 and V100 GPUs to demonstrate its superior performance when using dynamic sparsity to accelerate training. 

\para{OPT Training.}
\begin{figure}[t]
  \centering
  \includegraphics[width=1.0\columnwidth]{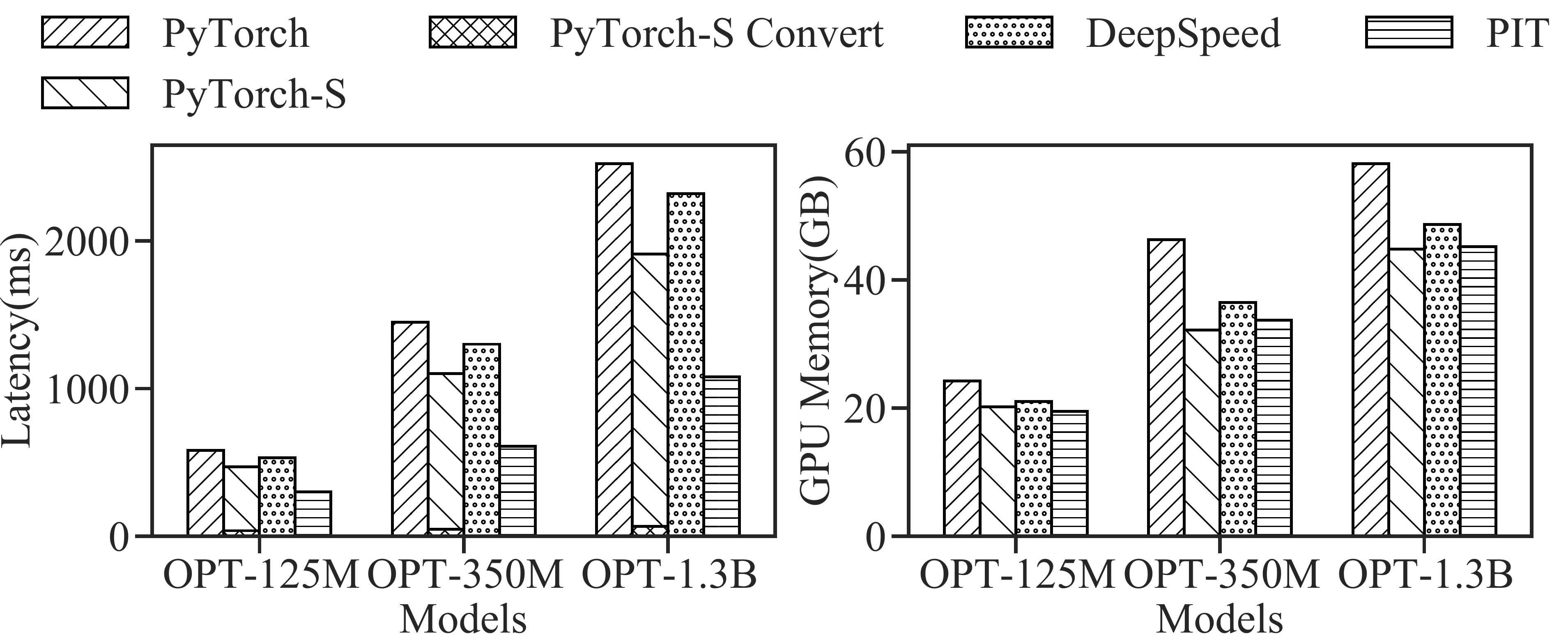}
  \caption{End-to-end latency per batch and memory footprints of OPT training.}
  \label{fig:exp_opt_training}
\end{figure}
We fine-tuned the 125M, 350M, and 1.3B OPT models using the Alpaca dataset on an NVIDIA A100-80GB GPU.  We utilized \sysname{} to optimize the dynamic sparsity due to varying sentence lengths within the same batch. Due to memory limitations, we set the training batch size to 8. In our experiment, we compared the performance of \sysname{} with PyTorch, PyTorch-S, and DeepSpeed. The results, as shown in \autoref{fig:exp_opt_training}, demonstrate the time-cost and memory footprint of a forward and backward pass. \sysname{} achieved 1.9x$\sim$2.4x, 1.6x$\sim$1.8x, and 1.8x$\sim$2.2x faster speed than PyTorch, PyTorch-S, and DeepSpeed, respectively. \revise{Similar to the inference optimization of varying sentence lengths, \sysname{} saves the computation of padding in PyTorch and DeepSpeed. Compared to PyTorch-S, \sysname{} supports more fine-grained sparsity granularity (1 token) than Triton's block sparse granularity (32 tokens) leading to more efficient computation.} Also, \sysname{} saves memory access overheads caused by reformatting data from dense to sparse formats in PyTorch-S.
In terms of memory footprint, \sysname{} and PyTorch-S have the smallest memory footprints during training with dynamic sparsity. \revise{Compared to inference, DeepSpeed cannot save the activation memory by fusing the entire layer into one operator in training, thus leading to more memory consumption.}

\begin{figure*}[t]
  \centering
  \subfloat[Latency]{
    \label{fig:exp_nnpruning_lat}
    \includegraphics[width=0.99\columnwidth]{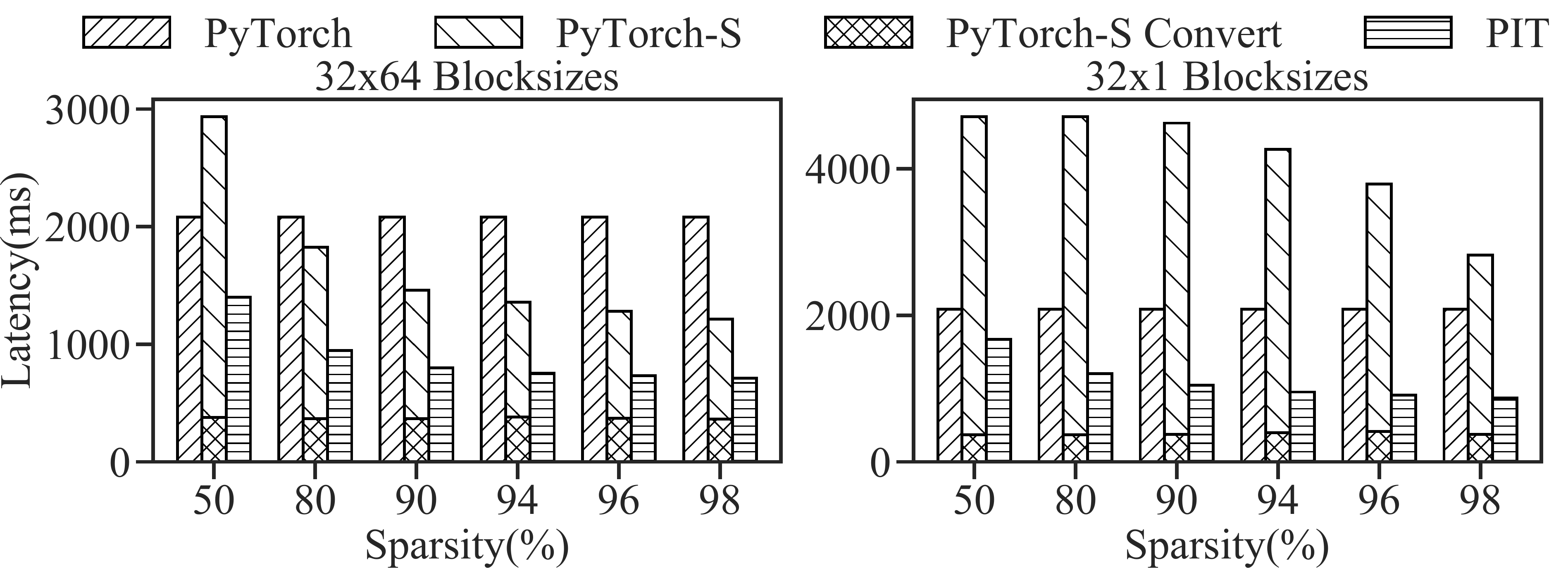}} 
  \subfloat[Memory]{
    \label{fig:exp_nnpruning_mem}
    \includegraphics[width=0.99\columnwidth]{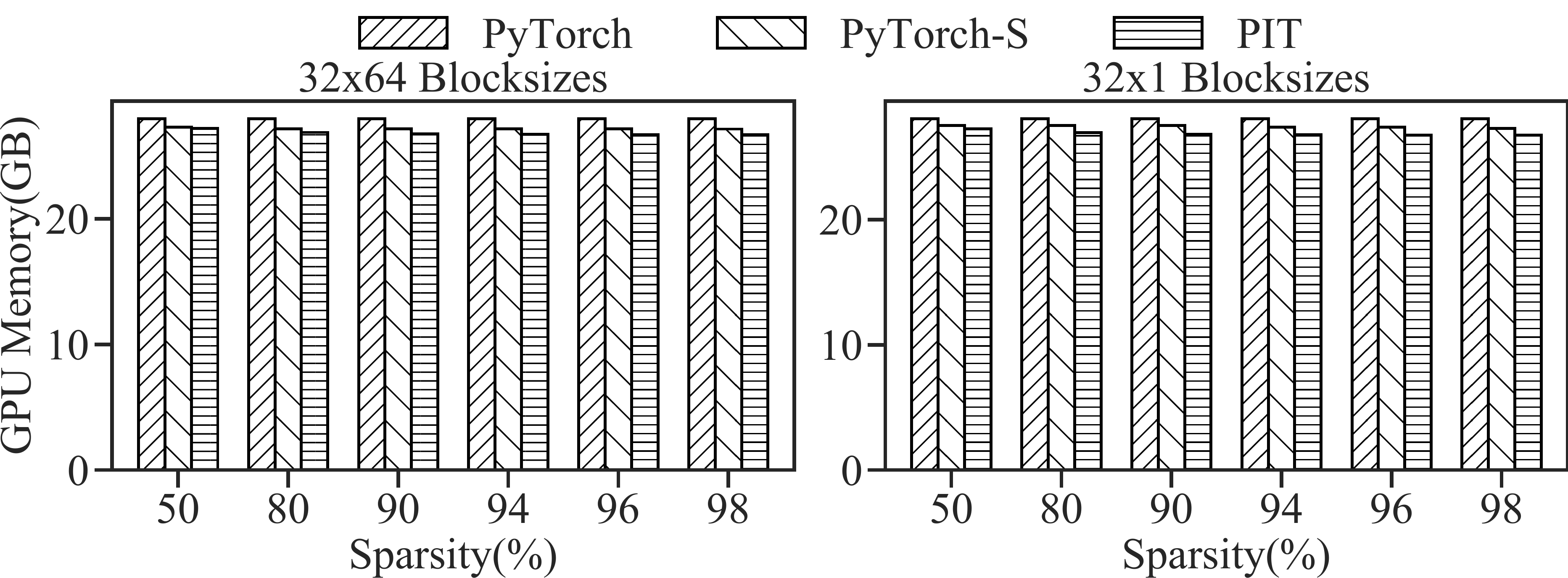}} \\
  \caption{End-to-end latency per batch and memory footprints of magnitude iterative pruning under $32\times 64$ and $32\times 1$ block.}
  \label{fig:exp_nnpruning_32_64}
\end{figure*}

\para{Sparse Training.}
We use iterative pruning \cite{li2020train}, a common approach of model compression, to demonstrate how \sysname{} can leverage dynamic sparsity to speed up sparse training. At each step, the pruning algorithm generates a mask based on the weight's magnitude, which varies for different inputs. We evaluated our approach on the GLUE dataset \cite{wang2018glue} using the BERT model, which we pruned using block-wise sparsity at two granularities: $32\times 64$ and $32\times 1$.
\autoref{fig:exp_nnpruning_lat} shows the time to process a batch of data (including both forward and backward passes) for each sparsity granularity setting, with a batch size of 32. PyTorch-S used Triton as the backend. When the sparsity granularity was $32\times 64$, \sysname{} achieved a 1.5x$\sim$3.0x and 1.7x$\sim$2.2x speedup over PyTorch and PyTorch-S, respectively. \revise{Although the sparsity granularity of $32\times 64$ is larger than the computation tile of PyTorch-S's block sparse kernel (i.e., $32\times32$), the sparsity pattern of each layer constantly changed during pruning. It requires every layer of PyTorch-S to rebuild the sparse indices once per batch. This makes PyTorch-S suffer from heavy index construction overhead. \sysname{} outperforms PyTorch-S on the granularity of $32\times 64$ mainly due to its fast index construction.}

\revise{Existing research has already shown a smaller sparsity granularity could bring higher accuracy but challenges execution performance optimization \cite{nn_pruning}.} In our evaluation, using the granularity is $32\times1$, the pruned model 
achieved 0.26\%$\sim$ 0.72\% accuracy gain on the MNLI dataset and 0.11\% $\sim$ 1.26\% accuracy gain on the SST-2 dataset. For latency, \sysname{} outperformed PyTorch and PyTorch-S by 2.4x and 4.8x, respectively. \revise{The computation time of PyTorch-S increased significantly due to the misaligned data sparsity granularity ($32\times1$) and the block-sparse GPU kernels ($32\times32$ or $16\times16$). It is worth noting that \sysname{} achieved almost the same speed as $32\times64$ when the sparsity granularity was set to $32\times1$, because \sysname{} can use the fine-grained $32\times 1$ micro-tile to cover the sparse data while using the $32\times64$ kernel for the most efficient computation, achieving the best of both worlds.}

\autoref{fig:exp_nnpruning_mem} shows the memory usage during the pruning process. \sysname{} used the least memory compared to PyTorch and PyTorch-S. The memory footprint only dropped slightly as the sparsity ratio increased, as the iterative pruning algorithm only prunes the model weights during the pruning, and weight tensors take up only a small fraction of memory. In contrast, PyTorch and PyTorch-S store the dense weights and gradients, so their memory footprint almost does not change with the sparsity ratio.

\subsection{Effectiveness of PIT transformation}
\label{sec:eval_effect_stile}
\begin{figure}
    \centering
    \includegraphics[width=1\columnwidth]{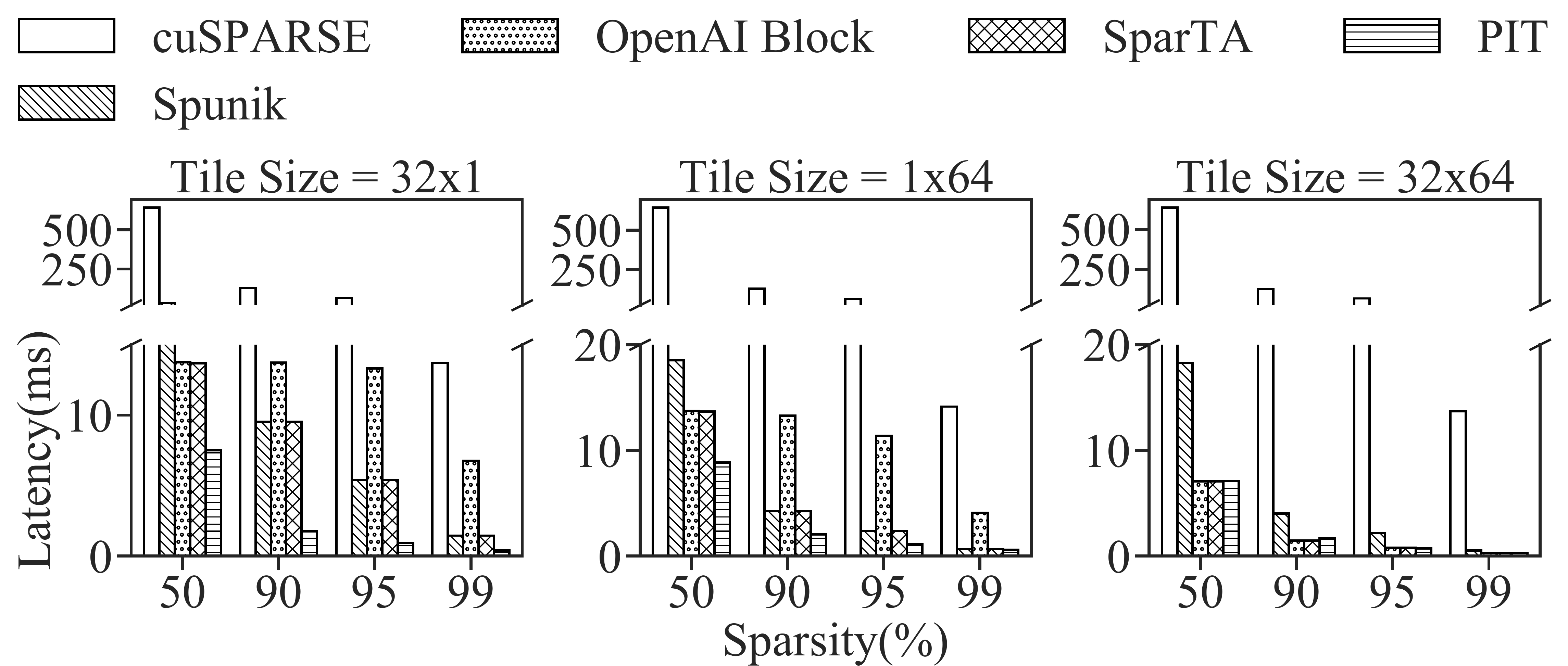}
    \caption{\label{fig:exp_stile_4096}Comparison of cuSPARSE, Sputnik, OpenAI Block Sparse, SparTA and \sysname{} on matrix multiplication ($4096\times 4096\times 4096$) under different sparsity ratios.}
\end{figure}

\noindent\textbf{PIT transformation on dense kernels.}
In this study, we evaluate the performance of \sysname{}'s sparse matrix multiplication kernel with different sparsity granularities and shapes. To demonstrate the effectiveness of PIT transformation, we compare \sysname{} with several other sparse libraries, including cuSPARSE, Sputnik, and OpenAI Block Sparse (Triton). We also introduce SparTA~\cite{sparta}, a state-of-the-art sparse deep-learning compiler designed for static sparsity optimization. \revise{For this experiment, we use a static sparsity pattern to evaluate the computation efficiency, therefore the latency results shown in \autoref{fig:exp_stile_4096} do not include conversion or compiling overhead.}
When the sparsity granularity is $32\times64$, \sysname{}, SparTA, and OpenAI Block Sparse have similar latency because they use the same dense computation tile. cuSPARSE and Sputnik perform poorly due to their inefficient fine-grained computation granularity. \revise{When the sparsity granularity is $32\times1$ and $1\times64$, Sputnik and SparTA outperform cuSPARSE and OpenAI Block Sparse due to better granularity alignment. For the sparsity granularity of $32\times1$, \sysname{} is 4.3x$\sim$5.8x faster than Sputnik and 1.5x$\sim$5.7x faster than SparTA. For the sparsity granularity of $64\times1$, \sysname{} is 1.1x$\sim$2.3x faster than Sputnik and 1.1x$\sim$2.2x faster than SparTA. The PIT transformation contributes to the performance gain of \sysname{} over Sputnik and SparTA by allowing \sysname{} 
to perform the efficient computation even under a small sparsity granularity (\ie $32\times 1$)
}. In addition, \sysname{} has a negligible overhead on \sload{} and \swrite{} running at a speed close to the original dense computation tile for different sparsity granularities.

\noindent\textbf{PIT transformation on hardware instructions.}
In this experiment, we illustrate how PIT transformation can loosen the constraints on hardware instructions, like wmma~\cite{wmma}. We conduct sparse matrix multiplication on two different sparsity granularities ($32\times1$ and $32\times64$) for a $[4096,4096]\times[4096,4096]$ matrix multiplication. The first input tensor is sparse and stored in column-major format. The wmma instruction only supports three shapes ($[16,16]\times[16,16]$, $[32,8]\times[8,16]$, $[8,32]\times[32,16]$) in half-precision, making it unsuitable for a $32\times1$ sparsity granularity. We use PIT transformation to derive two sparse kernels with micro-tiles of $32\times1$ and $32\times64$, respectively. We apply the sparse kernel with a micro-tile of $32\times1$ to perform sparse matrix multiplication with $32\times1$ sparsity granularity, while the kernel with a micro-tile of $32\times64$ is used for $32\times64$ sparsity granularity. \autoref{fig:exp_stile_4096} shows the two sparse kernels generated by PIT have similar latency at different sparsity ratios. It proves PIT transformation introduces little overhead.

\begin{figure}
    \centering
    \includegraphics[width=0.6\columnwidth]{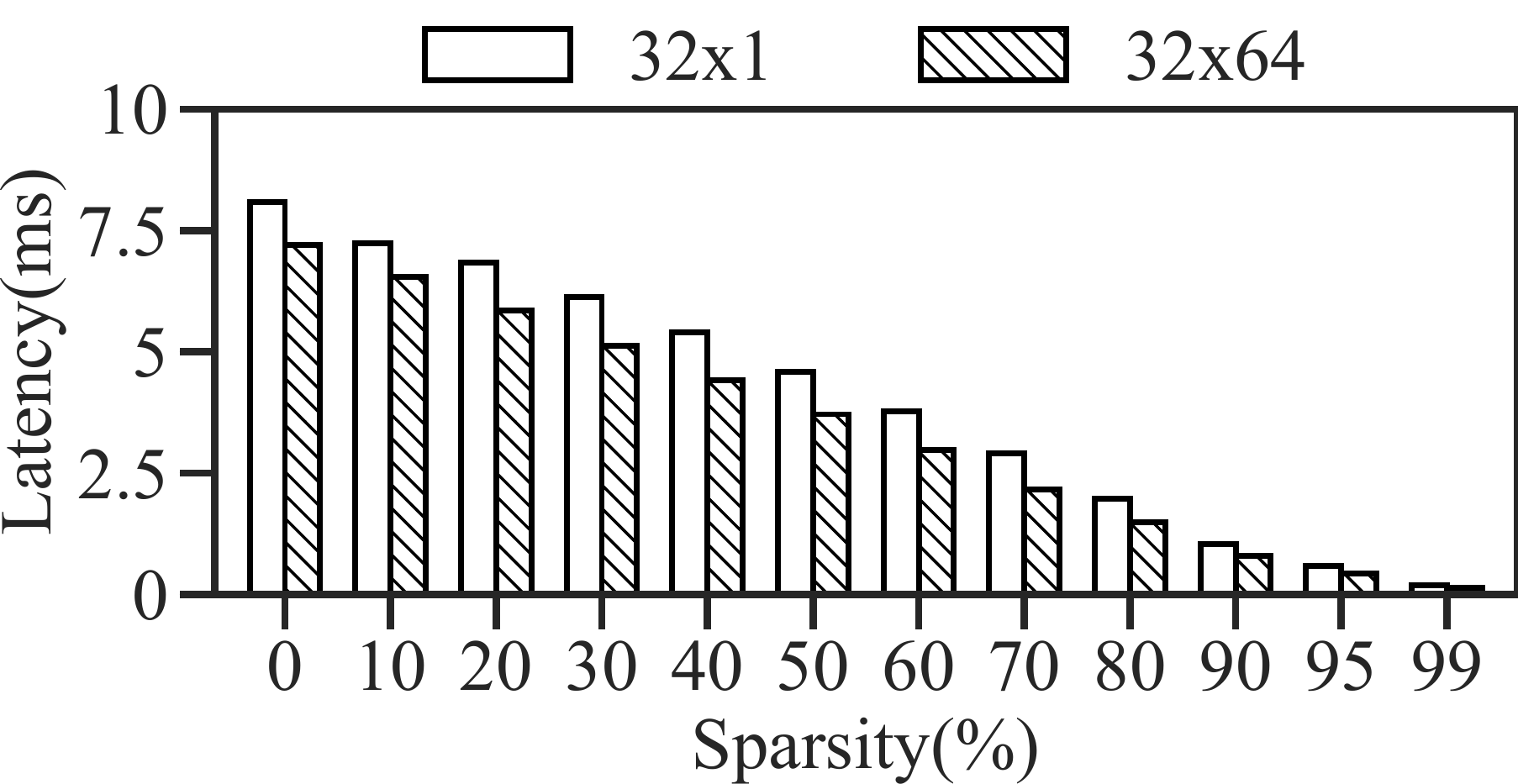}
    \caption{\label{fig:exp_tensorcore} Latency of \sysname{} with Tensor Core.}
\end{figure}
\subsection{Conversion Overhead}

In this experiment, we highlight the conversion overhead of \sysname{} in more detail. Specifically, we evaluate the conversion overhead under different sparsity granularities and sparsity ratios \revise{on a V100-32GB GPU\footnote{Different from V100-32GB, PIT has a latency similar to PyTorch-S on V100-16GB for tile size 1x1. The performance advantage of PIT over other tile size on V100-16GB remains the same as in \autoref{fig:exp_convert_4096}.}}. \autoref{fig:exp_convert_4096} shows the sparse index construction time of \sysname{} and PyTorch-S. PyTorch-S selects the index construction function provided by cuSPARSE when the granularity is $1\times1$ and the function provided by Triton when the granularity is $16\times16$ and $32\times32$. \sysname{} is 3.6x$\sim$4.7x faster than cuSPARSE when the granularity is $1\times1$, 11.2x$\sim$14.2x faster than Triton when the granularity is $16\times16$, and 13.3x$\sim$26.5x faster than Triton's index construction when the granularity is $32\times32$. As far as we know, \sysname{} is the first to support the fast online index construction for all kinds of sparsity granularities.

\label{sec:eval_convert}
\begin{figure}
  \centering
   \includegraphics[width=0.95\columnwidth]{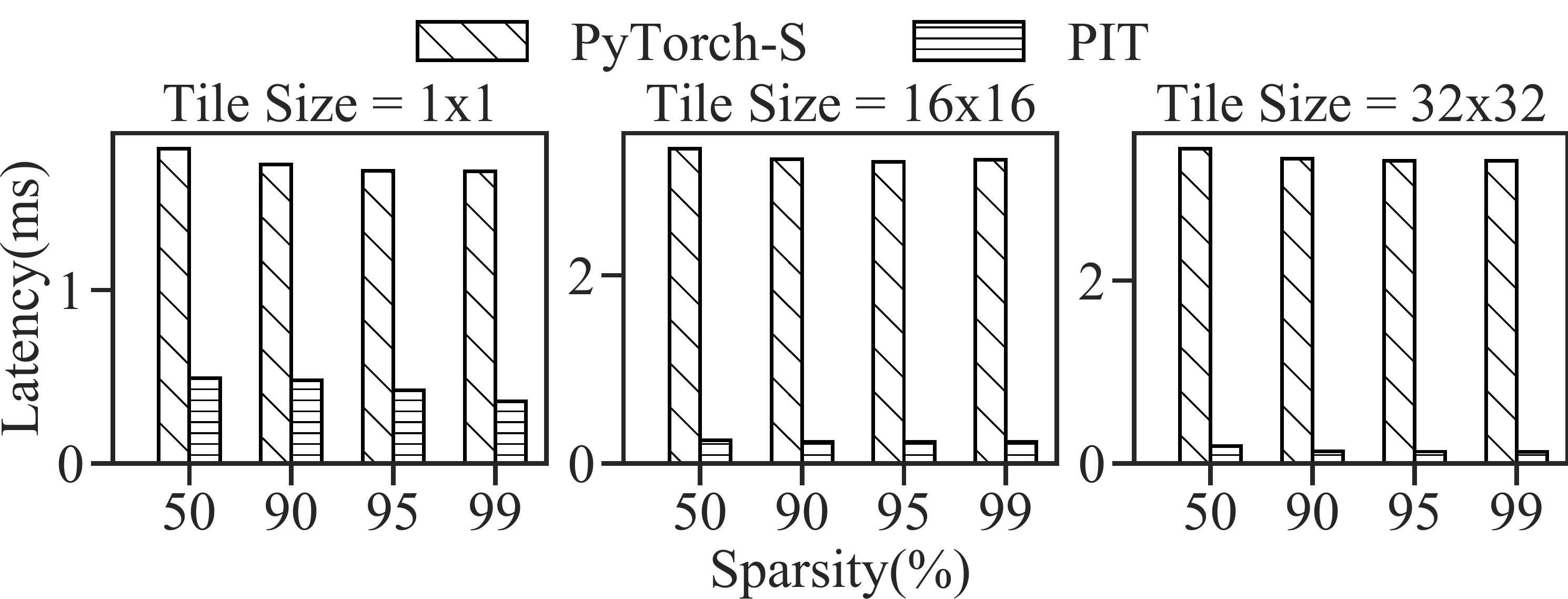}
  \caption{\label{fig:exp_convert_4096}The index construction latency of tensor with shape of $4096\times4096$.}
\end{figure}

\begin{figure}[t]
  \centering
  \includegraphics[width=1\columnwidth]{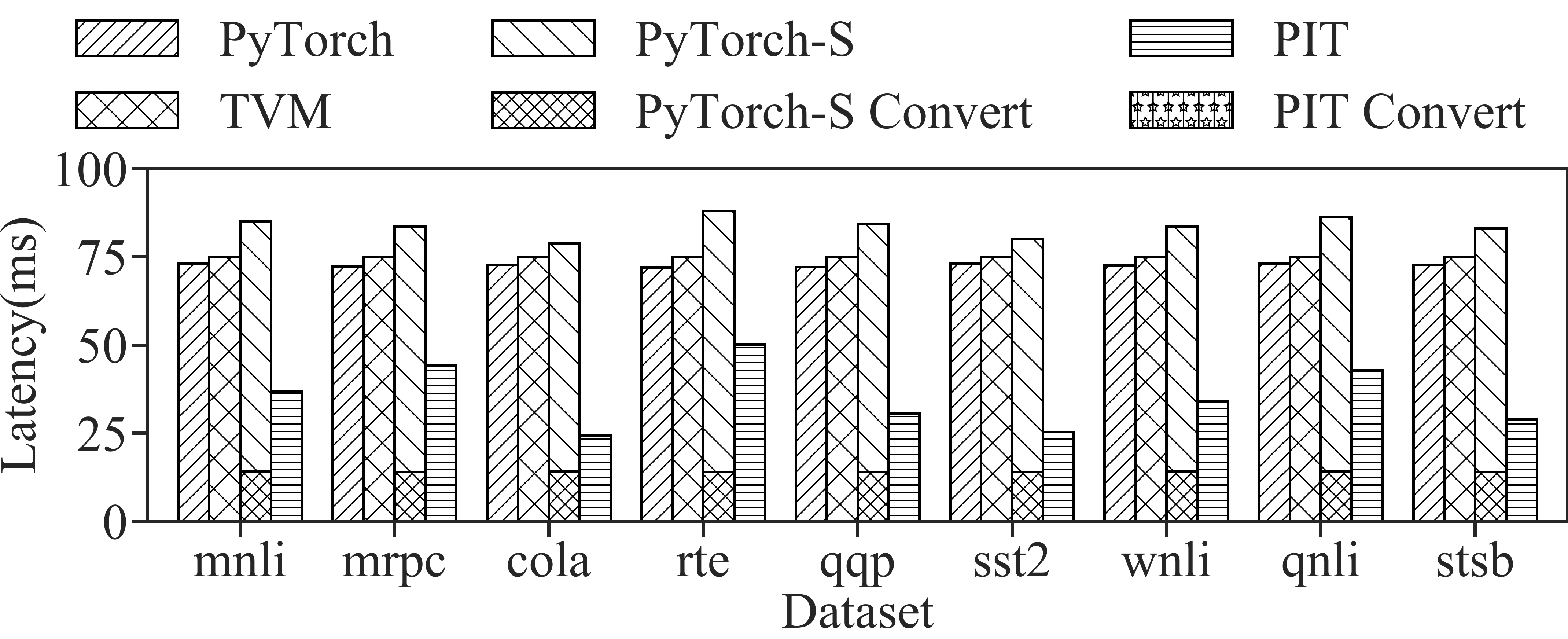}
  \caption{\revise{End-to-end conversion overhead of \sysname{}.} }
  \label{fig:exp_tvm}
\end{figure}

\revise{To further evaluate the conversion overhead, we also measure the proportion of the conversion overhead in the end-to-end latency of \sysname{}. Specifically, PIT optimizes the dynamic sparsity caused by different sequence lengths in the BERT model. In addition to PyTorch, we also introduce TVM, a popular dense tensor compiler. Each task in TVM is finetuned 2000 steps by Auto-Scheduler (Ansor~\cite{zheng2020ansor}). \autoref{fig:exp_tvm} shows the end-to-end latency of \sysname{} and the baselines, including the index construction overhead of \sysname{} (``\sysname{} Convert'') and PyTorch-S (``PyTorch-S Convert''). The conversion overhead of \sysname{} accounts for 0.7\% to 1.1\% of the end-to-end latency, which is almost invisible in the figure.}

\subsection{Micro-Tile Online Searching} \label{sec:eval_online_search}
Different sparsity patterns and different sparsity ratios may lead to different optimal micro-tiles. \sysname{} considers the efficiency of the computation kernel and the computation waste at the same time to find the best micro-tile configuration. Table~\ref{tab:stile_online_search} shows the searched micro-tiles of $4096\times4096\times4096$ matrix multiplication under different sparsity granularities and ratios. Take the first line of Table~\ref{tab:stile_online_search} as an example. The algorithm finds it most efficient to use a micro-tile of $16\times1$ to cover the granularity of $2\times1$ when the sparsity ratio is 95\%. \revise{The micro-tile of $16\times 1$ for $2\times 1$ data leads to a sparsity ratio of 66.39\% in \sysname{}'s computation.} 
The micro-tile $16\times1$ is derived from the dense computation tile $16\times32\times$128 by applying PIT transformation on the second axis of the first input tensor. \sysname{} balances the trade-off between the efficiency of the kernel and the computation waste on the fly. It takes 30us$\sim$100us for \sysname{} to search for the best micro-tile and the corresponding dense computation tile, which is fast enough for online searching. 

\begin{table}[t]
    \centering
    \footnotesize
    \setlength{\tabcolsep}{1mm}
    \resizebox{\columnwidth}{!}{
    \begin{tabular}{l|c|c|c|c|c}
    \toprule
        \makecell[c]{Sparsity\\Granularity} & \makecell[c]{Origin \\Sparsity \\Ratio(\%)} & Micro Tile & \makecell[c]{Sparsity \\Ratio After \\ Cover (\%)} & \makecell[c]{Origin \\Dense Kernel} & \makecell[c]{Latency\\(ms)} \\
        \midrule\midrule
        (2,1) & 95 & (16, 1) & 66.39 & $[16,32]\times[32,128]$ & 8.04 \\
        (2,1) & 99 & (8, 1) & 96.06 & $[8,32]\times[32,128]$ & 2.34 \\
        (4,1) & 95 & (16, 1) & 81.45 & $[16,32]\times[32,128]$ & 4.29 \\
        (4,1) & 99 & (16, 1) & 96.05 & $[16,32]\times[32,128]$ & 1.37 \\
        (8,1) & 95 & (8, 1) & 95 & $[8,32]\times[32,128]$ & 2.34 \\
        (8,1) & 99 & (32, 1) & 96.02 & $[32,64]\times[64,32]$ & 0.90 \\
        (32, 1) & 95 & (32, 1) & 95 & $[32,64]\times[64,32]$ & 0.94 \\
        (32, 1) & 99 & (32, 1) & 99 & $[32,64]\times[64,32]$ & 0.39 \\
        \bottomrule
    \end{tabular}
    }
    \caption{The micro tile online search results for different sparsity granularity and sparsity ratios.}
    \label{tab:stile_online_search}
\end{table}

\begin{figure}
  \centering
  \includegraphics[width=1\columnwidth]{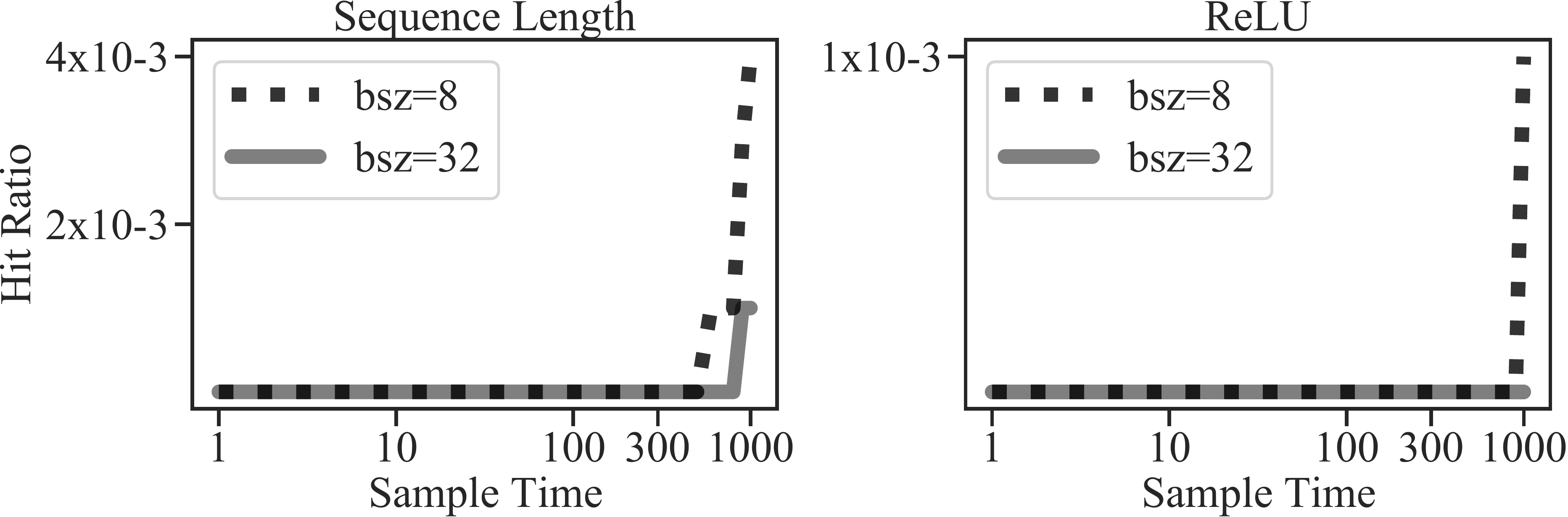}
  \caption{\revise{Investigation of the dynamics of sparsity patterns in varying sequence lengths and ReLU.} }
  \label{fig:exp_hint}
\end{figure}

\subsection{\revise{Dynamic Sparsity Pattern Study}} \label{sec:eval_dynamic_sparse_pattern}
\revise{A potential alternative solution for dynamic sparsity is to memorize frequent sparsity patterns, compile optimized kernels for them, and reuse these kernels when the pattern appears again. In this micro-benchmark, we invalidate this solution by investigating how frequently the dynamic sparse pattern will appear multiple times. We use two widely-existing dynamic sparsity patterns, \ie the varying input sequence lengths and the dynamic sparsity caused by the ReLU operator. Specifically, we traverse the MNLI dataset with different batch sizes (8 and 32) and check whether the sparse pattern of the current input batch has appeared in the previous batches. \autoref{fig:exp_hint} shows the cumulative hit ratio of dynamic sparsity that has appeared in previous batches. We find the repetition ratios of the two sparsity patterns are both notably low. The varying sequence lengths only have 0.4\% requests hitting a sparsity pattern that appeared in the previous batches. The ratio is even lower to 0.1\% for ReLU. Given the dynamic nature of such sparsity, many previous research works for static sparsity optimization are no longer applicable to dynamic sparsity. The sparse kernel optimized for a specific sparsity pattern is almost non-reusable.}

\section{Related Works}

\revise{Dynamic sparsity has emerged as a critical area for improving the efficiency of deep learning models. We discuss the different categories of existing solutions, including the software solutions for both static sparsity patterns and dynamic sparsity patterns, as well as the hardware solutions. 
The comparison between \sysname{} and these solutions can help to better understand the technical contribution of \sysname{}.}

\noindent\revise{\textbf{Optimizations for Static Sparsity Patterns.} 
Ahead-of-time compiler-based techniques like SparTA~\cite{sparta}, TACO~\cite{taco},
SparseTIR~\cite{sparsetir}, Tiramisu~\cite{baghdadi2020tiramisu}, and SparseRT~\cite{wang2020sparsert} involve searching for an appropriate kernel configuration for a specific sparsity pattern. These techniques can achieve high performance for a given sparsity pattern but fail to handle dynamic sparsity patterns, which are only known at runtime. The key contribution of \sysname{} is to support dynamic sparsity patterns on the fly through its low-overhead online sparsity detection and sparse-dense data transformation (\ie SRead and SWrite). Moreover, for most static sparsity patterns, \sysname{} can achieve performance similar to the ahead-of-time compilers, even though \sysname{} can only detect the sparsity patterns at runtime, greatly saving the compiling overhead.}


\noindent\revise{\textbf{Optimizations for General Sparsity Patterns.} 
Although general sparse libraries do not require ahead-of-time compilation, \eg OpenAI's block sparse kernel~\cite{triton}, cuSPARSE ~\cite{cusparse}, cuSPARSELT~\cite{cusparselt}, HipSparse~\cite{hipsparse}, Sputnik~\cite{sputnik}, nmSparse~\cite{nmsparse}, these libraries only support or work effectively on specific data granularity and computation granularity, making them difficult to support more complex sparsity patterns. For example, OpenAI's block sparse kernel~\cite{triton} only supports sparse data blocks of $32\times32$, leading to wasted computation when the sparsity pattern is more fine-grained (\eg $1\times32$). \sysname{} solves the problem by micro-tile. \sysname{} supports the online construction of larger tiles by sparsely reading/writing multiple micro-tiles, which achieves both high GPU efficiency and low sparsity coverage cost. Although ASpT~\cite{hong2019adaptive} proposes an adaptive tiling mechanism for fine-grained sparse matrix multiplication, it introduces significant offline data rearrangement overheads. But \sysname{} can construct the tile with negligible overheads as long as a micro-tile can saturate the GPU memory transaction.}

\noindent\revise{\textbf{Hardware Optimizations for Sparsity Computation.}
In addition to the above software solutions, many hardware optimizations have been proposed for the sparse computation of deep learning models~\cite{williams2007optimization, bell2008efficient, bulucc2009parallel, sgk_sc2020, sparsetensorcore}. 
These hardware optimization techniques often target specific sparsity patterns. For example, NVIDIA's Sparse Tensor Core~\cite{sparsetensorcore} in A100 GPUs only supports a strict 2-in-4 pattern, \ie every 
$1\times4$ tile should have exactly 2 zeros, which greatly limits the applicability to more diverse sparsity patterns. \sysname{} not only does not assume the sparsity pattern but also has the ability to augment the limited sparsity patterns of existing hardware solutions. For instance, when a matrix has mixed multiple 1x4 tiles with two sparsity patterns: two zeros and all zeros, \sysname{} can construct micro-tiles to only feed the two-zero cases to the Sparse Tensor Core, avoiding the unnecessary computation of all-zero tiles. Such a GPU kernel can be implemented by combining \sysname{}'s SRead/SWrite with the “mma.sp” PTX instruction of Sparse Tensor Core, which we leave as future work.}

\noindent\revise{\textbf{Optimization for LLMs.} Among recent advances to optimize the inference of large language models, vLLM~\cite{vllm} is the most relevant to \sysname{}. vLLM proposed Paged Attention to ``sparsely'' load/save tokens from/to different physical addresses of GPU memory, breaking the continuous storage limitation of tokens. It saves excessive padding of varying sequence lengths and redundant copies during beam search. Paged Attention can be treated as a domain-specific solution for the special dynamic sparsity pattern in generative language models. \sysname{} can be used to implement vLLM with a customized PIT transformation policy. By design, \sysname{} is a general solution for dynamic sparsity that facilitates the  support of varying sequence lengths and more challenging sparsity patterns (e.g., MoE, sparse attention, sparse training).}

\vspace{-0.2cm}
\section{Conclusion}
\sysname{} takes a principled approach to support efficient execution of dynamically sparse models on commodity accelerators, based on \emph{permutation invariant transformation}, a property commonly existing in deep learning computations. With this property, \sysname{} constructs computation-efficient dense tiles from hardware-friendly micro-tiles in an online manner. \sysname{} demonstrates a novel and effective way of handling dynamic sparsity, a growing trend in deep learning. The extensive evaluation shows \sysname{} can accelerate dynamic sparsity computation in both inference and training by up to 5.9x over state-of-the-art solutions.

\section*{\revise{Acknowledgement} }
\revise{We thank our shepherd, Thomas Pasquier, for his valuable suggestion that improves the quality of our paper. We also thank the anonymous reviewers for their constructive feedback. Quanlu Zhang, Zhenhua Han, and Yuqing Yang are the corresponding authors.}

\bibliographystyle{plain}
\bibliography{references}

\end{document}